\documentclass[journal,draftclsnofoot,12pt,onecolumn]{IEEEtran}
\usepackage{amsmath,amssymb,amsthm,mathtools,bm}
\usepackage{dsfont}
\usepackage{graphicx}
\usepackage{booktabs}
\usepackage{multirow,caption}
\usepackage{algorithm}
\usepackage{algpseudocode}
\usepackage{cite}
\usepackage{microtype}
\usepackage[colorlinks=true,
            linkcolor=blue,
            citecolor=blue,
            urlcolor=blue]{hyperref}
\usepackage[nameinlink,capitalize]{cleveref}

\newtheorem{proposition}{Proposition}

\title{Confounding-Valid Conformal Inference for Counterfactual KPIs in Wireless Networks}
\date{}
\author{Abdessamed Qchohi, Jessica Moysen Cortes, and Matteo Zecchin
\thanks{
Abdessamed Qchohi and Matteo Zecchin are with the Communication Systems Department, EURECOM, 
06904 Sophia Antipolis, France  (e-mail: \{abdessamed.qchohi, matteo.zecchin\}@eurecom.fr). Jessica Moysen Cortes is with Huawei Technologies Sweden AB, Sweden (e-mail: jessica.moysen.cortes@huawei.com).
}
}

\begin{document}

\maketitle
\begin{abstract}
Conformal counterfactual inference enables network operators to use logged telemetry to reliably answer “what-if” questions about network operation. These answers typically take the form of prediction sets that contain, with a user-defined probability, the key performance indicators (KPIs) that would have been observed under alternative control actions. A key challenge is that logged telemetry may omit variables used by the controller, resulting in hidden confounding and invalidating the statistical guarantees of counterfactual analysis. In principle, this issue can be addressed using randomized telemetry, collected by assigning control actions independently of the network state. However, because such randomization may disrupt normal operation, randomized telemetry is typically scarce, causing counterfactual analysis based solely on it to produce uninformative prediction sets. To address these challenges, we propose \emph{Confounding-Valid Counterfactual Conformal Inference} (CV-CCI), which combines abundant, potentially confounded observational telemetry with limited randomized data through the General Synthetic-Powered Inference (GESPI) principle. CV-CCI leverages observational data to improve efficiency while using randomized data to retain finite-sample coverage guarantees under arbitrary hidden confounding. Experiments on two representative radio access network (RAN) control tasks show that CV-CCI remains valid under hidden confounding while producing more efficient prediction sets than state-of-the-art confounding-valid baselines.
\end{abstract}

\begin{IEEEkeywords}
Radio access networks, telemetry, prediction methods,
inferential statistics, uncertainty
\end{IEEEkeywords}

\section{Introduction}

\subsection{Motivation}

Modern radio access networks (RANs), including those based on Open RAN (O-RAN) architectures, increasingly rely on intelligent controllers to adapt network operation to changing traffic demands, channel conditions, and service requirements \cite{bonati2021intelligence,polese2023colo}. Because these control decisions directly affect network key performance indicators (KPIs), operators may ask counterfactual questions such as: \emph{What KPI would have been observed had a different control action been taken?} The practical value of answering such ``what-if'' questions depends critically on reliable uncertainty quantification. Operators need not only a prediction of the KPI under an alternative action, but also a faithful measure of its uncertainty, typically expressed through an interval or prediction set designed to contain the target outcome with a prescribed probability \cite{hou2025if,simeone2026conformal}.

A key difficulty, however, is that the telemetry available for counterfactual analysis may not contain all the information used by the controller to select its action. For example, the controller may rely on fine-grained measurements, lower-layer information, private internal states, or signals exchanged with other controllers, whereas the logged telemetry may contain only aggregated, delayed, or filtered observations \cite{LACAVA2025111342,abdalla2022toward,Haohuang}. If an omitted variable affects both the selected action and the resulting KPI, the logged data are subject to hidden confounding \cite{hernan2010causal}. In that case, samples associated with a given action are not representative of what would have occurred had that action been assigned under the observed context, and prediction sets calibrated solely from observational telemetry may fail to satisfy their intended coverage requirements \cite{chen2024}.

One possible solution to this problem consists of applying counterfactual analysis to randomized telemetry, namely, data obtained by assigning control actions independently of both the observed network context and any hidden information available to the controller. This randomization removes the dependence between the action and hidden confounders, thereby allowing valid counterfactual coverage to be recovered. In a live network, however, such interventions may be costly, suboptimal, or incompatible with operational constraints. As a result, randomized telemetry is typically scarce, and prediction sets calibrated exclusively on it may be uninformative and highly variable. 

\begin{figure*}[t]
    \centering
    \includegraphics[width=0.85\linewidth]{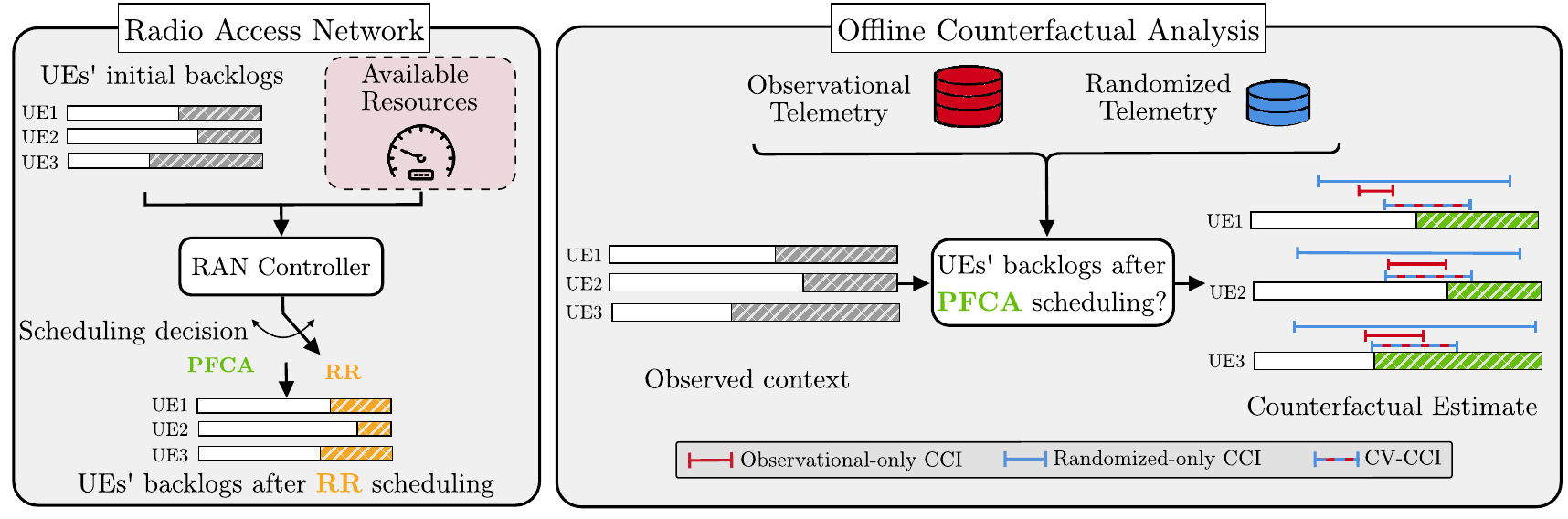}
    \caption{A RAN controller selects between proportional-fair channel-aware (PFCA) and round-robin (RR) scheduling based on the UEs' backlogs and available radio resources. The network operation is logged via telemetry that records the backlogs, selected scheduler, and residual backlog after scheduling, but omits resource availability. Counterfactual analysis can use abundant observational telemetry collected during normal operation and limited randomized telemetry in which the scheduler is selected independently of the network context. The goal is to construct a prediction set for the residual backlog that would have been observed under the alternative scheduler. Prediction intervals calibrated exclusively on observational telemetry (red) may fail to cover the ground-truth residual backlog (green), whereas those calibrated exclusively on randomized telemetry (blue) are valid but wide. CV-CCI combines both data sources to produce valid and more informative prediction sets (red-and-blue dashed).}
    \label{fig:motivating_example}
\end{figure*}

These issues are illustrated through a scheduling example in Figure \ref{fig:motivating_example}. At each scheduling frame, a network controller selects either round-robin (RR) or proportional-fair channel-aware (PFCA) scheduling based on the users' backlog information and the available radio resources. The logged telemetry, however, records only the user-level context and omits the instantaneous radio-resource budget. The goal of reliable counterfactual analysis is to construct a prediction set that contains, with a prescribed probability, the residual backlog that would have been observed under the alternative scheduling policy. Because the unobserved resource budget affects both the policy selected by the controller and the resulting residual backlog, prediction sets constructed solely from observational data may fail to provide valid coverage for the counterfactual residual backlog, as shown in the right panel of Figure \ref{fig:motivating_example}. Valid coverage can instead be recovered using randomized telemetry, collected by assigning the scheduling policy independently of both UEs' backlog information and resource availability. Such randomization may degrade network performance, however, and can therefore be applied only sparingly. As illustrated in the right panel of Figure \ref{fig:motivating_example}, the limited randomized data yield valid but wide prediction sets that provide little information about the counterfactual residual backlog.

The above discussion highlights a fundamental tension between validity and efficiency in counterfactual analysis under hidden confounding. Observational telemetry is abundant and informative but may fail to satisfy the desired coverage guarantee, whereas randomized telemetry supports valid inference but is scarce and costly to collect. Motivated by this tension, we propose \emph{Confounding-Valid Counterfactual Conformal Inference} (CV-CCI), a conformal methodology that combines abundant, potentially confounded observational telemetry with limited randomized telemetry. Specifically, CV-CCI exploits observational data to improve the efficiency of the prediction sets while using randomized data as a validity guardrail against arbitrary hidden confounding. As illustrated in Figure \ref{fig:motivating_example}, this combination retains finite-sample coverage while producing more informative  and stable prediction sets than methods based solely on randomized telemetry.

\subsection{Related Work}

\paragraph{Counterfactual Conformal Inference}
Counterfactual inference addresses the longstanding causal problem of estimating the potential outcomes that would have been observed under actions other than the one actually taken \cite{rubin1974estimating}. Since only the outcome under the realized action is observed, reliable counterfactual analysis requires quantifying uncertainty about the missing potential outcomes. Conformal prediction (CP) uses a calibration sample to convert any predictive model into a set predictor with marginal coverage guarantees, under the assumption that the calibration and test data are exchangeable \cite{vovk2005algorithmic,shafer2008tutorial}. In counterfactual settings, however, action selection generally induces a distribution shift between the observed sample and the target interventional data, violating the exchangeability assumption. Weighted CP (WCP) accounts for this shift by weighting calibration samples based on propensity score estimates \cite{tibshirani2019}, enabling uncertainty quantification for counterfactual outcomes \cite{lei2021}. These guarantees rely on causal identification assumptions, including the absence of hidden confounding. Conformal sensitivity analyses address departures from this assumption by bounding the severity of hidden confounding and translating these bounds into adjusted uncertainty sets \cite{yin2022conformal,jin2023sensitivity}. Closely related work combines observational and randomized data through WCP using an estimated density ratio between the two distributions \cite{chen2024}. In case of high-dimensional telemetry, however, ratio-estimation errors may compromise coverage and reduce efficiency. 

\paragraph{Counterfactual Inference in Wireless Networks}
Causal reasoning has been proposed as a foundation for generalizable and explainable AI-native wireless networks \cite{thomas2024causal}. A central capability is counterfactual, or ``what-if,'' analysis, which  asks how network key performance indicators (KPIs) would change under an alternative configuration, control policy, or RAN application. Digital twins provide a natural platform for such analyses by maintaining a virtual representation of the physical network that is synchronized with network telemetry \cite{khan2022digital,lin20236g,ruah2023bayesian}. Operators can query the twin to compare candidate interventions before deployment, without disrupting the live network \cite{ak2024if}. Closer to our setting, conformal counterfactual KPI estimation (CCKE) constructs prediction sets directly from live-network telemetry for the KPI that would have been observed under an alternative RAN application or action \cite{hou2025if}. CCKE provides prediction sets with finite-sample coverage under the assumption that the logged context captures all variables affecting action selection. Our work addresses the more general setting in which the controller also relies on unlogged network state, thereby introducing hidden confounding.

\paragraph{Inference with Auxiliary Data}
A growing literature develops inference procedures that combine a small dataset drawn from the target distribution with a larger dataset obtained from an auxiliary source.  Prediction-powered inference follows this principle by combining labeled data with model predictions on abundant unlabeled data and using the labeled data to correct prediction errors \cite{angelopoulos2023prediction}. A prediction-powered correction has also been used to improve the efficiency of conformal counterfactual inference under treatment imbalance, where treated calibration data are scarce \cite{farzaneh2025synthetic}. Related approaches extend this principle to treatment-effect estimation across experiments \cite{cadei2026prediction}, while other work combines randomized and observational studies for treatment-effect estimation \cite{rosenman2023combining,wu2022integrative}. More recently, General Synthetic-Powered Inference (GESPI) has extended this paradigm to a broad class of statistical procedures, using the small target-distribution sample as a guardrail against arbitrary distributional mismatch in the auxiliary data \cite{bashari2025gespi}. CV-CCI specializes the GESPI principle to wireless counterfactual prediction, with randomized telemetry providing the target-distribution sample and potentially confounded observational telemetry serving as auxiliary data.
\subsection{Contributions}

We introduce \emph{Confounding-Valid Counterfactual Conformal Inference} (CV-CCI) for constructing statistically valid and informative prediction sets for counterfactual network KPIs under hidden confounding. The main contributions are summarized as follows:

\begin{itemize}
    \item We formulate reliable counterfactual KPI inference in wireless networks when the controller selects actions using information that is only partially available in the logged telemetry. This setting captures the hidden confounding that arises from the mismatch between the controller's information and the operator's observations.

    \item We develop CV-CCI, a conformal counterfactual KPI estimation methodology that combines abundant, potentially confounded observational telemetry with limited randomized telemetry through a GESPI-based aggregation rule. We establish finite-sample marginal coverage under arbitrary hidden confounding, up to a user-specified tolerance, without requiring the hidden confounder to be observed or modeled.

    \item We evaluate CV-CCI in representative RAN-control settings involving MAC-layer scheduling and handover decisions. The results show that CV-CCI maintains reliable coverage as the strength of hidden confounding increases, while producing more informative and less variable prediction sets than methods calibrated exclusively on randomized telemetry.
\end{itemize}

\subsection{Organization}

The remainder of the paper is organized as follows. Section~II introduces the system model and formalizes counterfactual KPI inference under hidden confounding. Section~III reviews conformal prediction, weighted conformal prediction, counterfactual conformal inference, and GESPI. Section~IV presents CV-CCI and establishes its theoretical guarantees. Section~V evaluates the proposed methodology in MAC-layer scheduling and handover scenarios. Finally, Section~VI concludes the paper.

\section{Setting and Problem Definition}
\label{sec:problem-definition}

In this section, we formulate the problem of reliable counterfactual KPI estimation under hidden confounding in the context of RAN control.

\begin{figure*}[ht]
    \centering
    \includegraphics[width=0.8\textwidth]{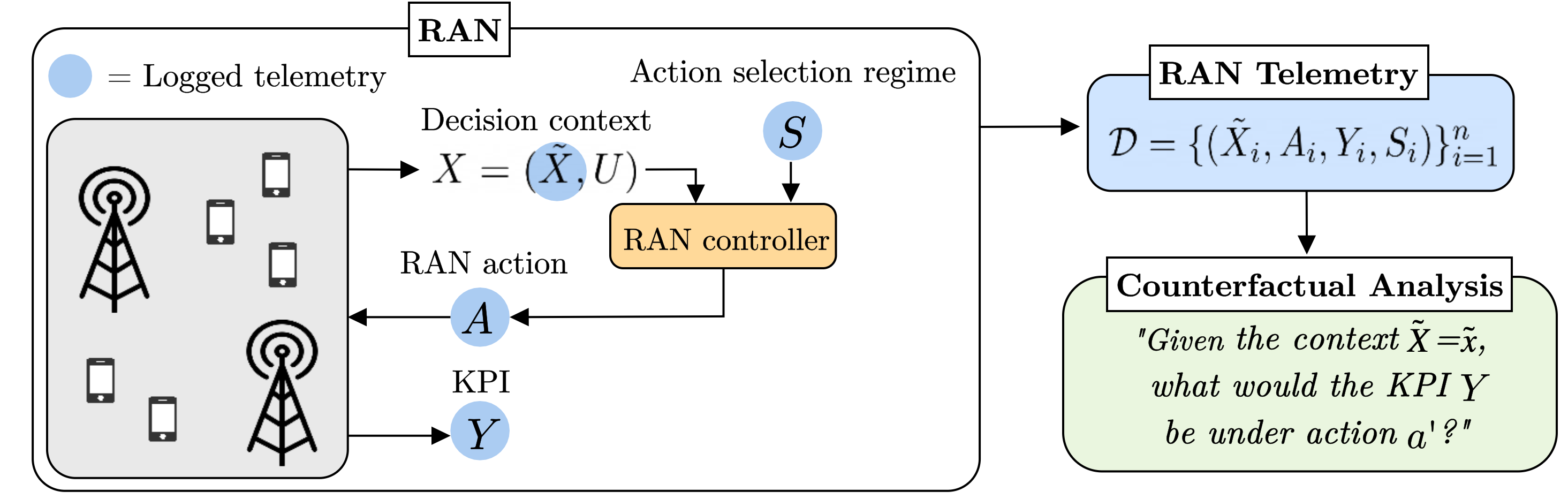}
    \caption{Illustration of the considered counterfactual KPI estimation setting. The controller observes the full network context $X=(\tilde{X},U)$, selects an action $A$ under either the observational or randomized regime $S$, and records the resulting KPI $Y$. Telemetry contains $(\tilde{X},A,Y,S)$ but not the hidden context $U$. Using the recorded telemetry, the goal is to construct a prediction set for the potential KPI $Y(a)$ associated with a target action $a$ and a logged state $\tilde{X}=\tilde{x}$.}
    \label{fig:setting}
\end{figure*}

\subsection{Setting}
\label{subsec:setting}

We consider the closed-loop wireless RAN control system illustrated in Figure \ref{fig:setting}, which operates as follows. At each decision instant, a network controller observes the network context $X$ and selects a control action $A$ from a finite action space $\mathcal{A}$ based on $X$. After action $A$ is applied, the controller records the resulting KPI $Y$. The action $A$ may correspond to any network decision, such as a scheduling decision, a handover command, the selection of an application to deploy in an O-RAN architecture \cite{polese2023understanding}, or the choice of a set of network-configuration parameters. The KPI $Y$ may quantify any performance metric of interest, such as latency, throughput, reliability, or energy efficiency.

Following the potential-outcome framework \cite{rubin1974estimating}, we define, for each action $a\in\mathcal{A}$, the potential KPI that would be observed if action $a$ were selected as $Y(a)\in\mathcal{Y}$. Additionally, under the stable unit treatment value assumption (SUTVA) \cite{rubin1990formal}, the observed KPI corresponding to the selected action $A$ is equal to the associated potential outcome, i.e.,
\begin{align}
    \label{eq:sutva}
    Y = Y(A).
\end{align}
Condition \eqref{eq:sutva} implies that the reported KPI is consistent with the network decision that was actually executed. It also rules out possible perturbations in performance reporting that would cause the observed KPI to differ from the potential KPI associated with the selected action \cite{hou2025if}.

\subsubsection{Action-assignment regimes} 
We consider two mechanisms for assigning the action $A$ and introduce the binary variable $S\in\{0,1\}$ to distinguish between the corresponding action-assignment regimes.

The first regime, corresponding to $S=0$, represents normal network operation. In this regime, the deployed controller selects action $A$ according to an arbitrary, possibly optimized and stochastic policy that may depend on the full decision context $X$. We refer to this assignment regime as \emph{observational}. Under the observational regime,
\begin{align}
    \label{eq:obs_policy}
    A \mid X, S=0
    \sim
    p_{\mathrm{obs}}(\cdot\mid X),
\end{align}
where $p_{\mathrm{obs}}(a\mid X)$ is the probability of selecting action $a\in\mathcal{A}$ given the decision context $X$.

The second regime, corresponding to $S=1$, represents randomized assignment, as in a randomized controlled trial (RCT) \cite{hernan2010causal}. Under the \emph{randomized} regime, the action is assigned independently of the decision context $X$ according to a known policy  $p_{\mathrm{rnd}}(\cdot)$,
\begin{align}
    \label{eq:int_policy}
    A|S=1 \sim p_{\mathrm{rnd}}(\cdot),
    \qquad
    A \perp
    X
    \mid S=1.
\end{align}
Under the randomized policy $p_{\mathrm{rnd}}(\cdot)$ every action has nonzero probability of being selected,
\begin{align}
\label{eq:int_positivity}
p_{\mathrm{rnd}}(a)>0,
\qquad
\text{for all } a\in\mathcal{A},
\end{align}
an assumption known as the positivity condition \cite{hernan2010causal, lei2021}.

We assume that the contexts and potential KPIs have the same distribution under the two regimes and that the regimes differ only in their action-assignment mechanisms. This assumption is satisfied when the choice of action-assignment regime is independent of the context and potential KPIs. For each regime $s\in\{0,1\}$, the conditional joint distribution factorizes as
\begin{align}
\label{eq:potential_sample}
P_{X,A,\{Y(a)\}_{a\in\mathcal A}\mid S=s}
=
P_X
P_{A\mid X,S=s}
P_{\{Y(a)\}_{a\in\mathcal A}\mid X},
\end{align}
where $P_X$ is the distribution of the decision context $X$,
$P_{\{Y(a)\}_{a\in\mathcal A}\mid X}$ is the conditional distribution
of the potential outcomes given $X$, and $P_{A\mid X,S=s}$ is the
action-assignment mechanism under regime $s$, given by
\begin{align}
P_{A\mid X,S=s}(a\mid x)
=
\begin{cases}
p_{\mathrm{obs}}(a\mid x), & s=0,\\
p_{\mathrm{rnd}}(a),       & s=1.
\end{cases}
\end{align}

\subsubsection{Telemetry logging}  The operation of the RAN is recorded through telemetry, with each decision instance logged as the tuple
\begin{align}
\label{eq:telemetry_sample_setting}
(\tilde{X},A,Y,S),
\end{align}
where $\tilde{X}$ denotes the \emph{logged} network state, $A$ is the action selected by the controller, $Y$ is the resulting KPI, and $S$ is the action-assignment regime.
Importantly, the logged state $\tilde{X}$ need not contain all the contextual information used by the controller. To make this mismatch explicit, we decompose the controller's decision context as
\begin{align}
    \label{eq:context_decomposition}
    X = (\tilde{X},U),
\end{align}
where $U$ collects the components of the controller’s information that are not recorded in the logged telemetry. Accordingly, the context distribution $P_X$ corresponds to the joint distribution of the logged and unobserved context components
\begin{align}
    \label{eq:context_distribution}
    P_X
    =
    P_{\tilde{X},U}
    =
    P_{\tilde{X}}P_{U\mid\tilde{X}}.
\end{align} 
This formulation captures several practical situations. For example, the logged data may contain summaries of past network measurements obtained by averaging or pooling over time windows, whereas the controller may act on fine-grained instantaneous measurements \cite{LACAVA2025111342,abdalla2022toward,Haohuang}. The controller may also rely on internal states that are not exposed through standard interfaces, use protected attributes unavailable at higher layers, or interact with other applications that use information contained in $U$ but absent from $\tilde{X}$.

The telemetry available for counterfactual analysis consists of two independent datasets collected under the observational and randomized regimes. The observational dataset contains $n_{\mathrm{obs}}$ independent and identically distributed samples from the observational regime,
\begin{align}
    \label{eq:obs_dataset}
    \mathcal{D}^{\mathrm{obs}}
    =
    \bigl\{
        (\tilde{X}_i,
        A_i,
        Y_i,
        S_i)
    \bigr\}_{i=1}^{n_{\mathrm{obs}}},
    \text{ with }
    S_i=0.
\end{align}
Similarly, the randomized dataset contains $n_{\mathrm{rnd}}$ independent and identically distributed samples from the randomized regime,
\begin{align}
    \label{eq:int_dataset}
    \mathcal{D}^{\mathrm{rnd}}
    =
    \bigl\{
        (\tilde{X}_i,
        A_i,
        Y_i,
        S_i)
    \bigr\}_{i=1}^{n_{\mathrm{rnd}}},
    \text{ with }
    S_i=1.
\end{align}
The aggregate telemetry dataset comprises $n=n_{\mathrm{obs}}+n_{\mathrm{rnd}}$ samples and is denoted as 
\begin{align}
    \label{eq:dataset}
    \mathcal{D}
    =
    \mathcal{D}^{\mathrm{obs}}
    \cup
    \mathcal{D}^{\mathrm{rnd}},
\end{align}
In the following, we focus on the practically relevant regime
$n_{\mathrm{obs}}\gg n_{\mathrm{rnd}}$, in which observational telemetry collected during normal network operation is abundant, whereas randomized experimentation is limited by operational costs, safety constraints, or service-level requirements.

\subsection{Problem Definition}
\label{subsec:problem_definition}

Using the aggregate telemetry dataset $\mathcal{D}$, the goal of reliable counterfactual KPI estimation is to answer what-if questions of the following form: given a logged network state $\tilde{X}=\tilde{x}$, what KPI would have been observed had the controller selected an action $a\in\mathcal{A}$?

For a given target action $a\in\mathcal{A}$, we answer such counterfactual queries by constructing a prediction set
\begin{align}
    \label{eq:prediction_set_mapping}
    \Gamma_a(\tilde{X},\mathcal{D})
    \subseteq
    \mathcal{Y}
\end{align}
that includes the potential KPI $Y(a)$ that would be observed if action $a$ were selected under the logged context $\tilde{X}$. For a user-defined miscoverage level $\alpha\in(0,1)$, the reliability is expressed via the marginal coverage guarantee
\begin{align}
    \label{eq:marginal_coverage_potential_multi_action}
    \Pr\bigl[
        Y(a)\in\Gamma_a(\tilde{X},\mathcal{D})
    \bigr]
    \geq
    1-\alpha,
\end{align}
where the probability is taken jointly over the dataset $\mathcal{D}$ and an independent test unit distributed according to
\begin{align}
    \label{eq:target_test_distribution}
    P_{\tilde{X},U,Y(a)}
    =
    P_{\tilde{X}}
    P_{U\mid\tilde{X}}
    P_{Y(a)\mid\tilde{X},U}.
\end{align}
Condition \eqref{eq:marginal_coverage_potential_multi_action} is a marginal coverage guarantee ensuring that the prediction set $\Gamma_a(\tilde{X},\mathcal{D})$ contains the target potential KPI $Y(a)$ with probability at least $1-\alpha$.

\begin{figure}
    \centering
    \includegraphics[width=0.5
    \textwidth]{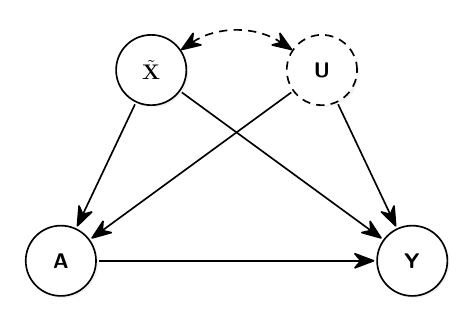}
    \caption{Causal structure under hidden confounding. In the observational regime, action $A$ is selected using the full context $X=(\tilde{X},U)$, while the unobserved component $U$ may also affect the potential KPIs ${Y(a)}_{a\in\mathcal{A}}$. Conditioning only on the logged state $\tilde{X}$ therefore does not, in general, block the dependence between $A$ and the potential outcomes. Randomized assignment removes this dependence by making $A$ independent of the context.}
    \label{fig:dag}
\end{figure}

It is important to highlight that, in the presence of hidden confounding, without additional assumptions, observational telemetry  $\mathcal{D}^{\mathrm{obs}}$ alone does not
identify the target distribution $P_{Y(a)\mid\tilde{X}}$ and therefore
cannot, in general, support nontrivial distribution-free prediction sets
with the desired target-population coverage \eqref{eq:marginal_coverage_potential_multi_action}. As illustrated in Figure~\ref{fig:dag}, the unobserved variable $U$ may affect both the selected action $A$ and the potential outcomes $\{Y(a)\}_{a\in\mathcal{A}}$. Therefore, conditional ignorability with respect to the logged state,
\begin{align}
    \label{eq:strong_ignorability}
    \{Y(a)\}_{a\in\mathcal{A}}
    \perp
    A
    \mid
    \tilde{X},S=0,
\end{align}
need not hold under the observational regime~\cite{chen2024}. As a result, the observational outcome distribution and target potential-outcome distribution are not equal in general,
\begin{align}
    \label{eq:obs_conditional_distribution}
    P_{Y\mid \tilde{X}=\tilde{x},A=a,S=0}
    \neq
    P_{Y(a)\mid \tilde{X}=\tilde{x}}.
\end{align}
It follows that observational samples assigned to action $a$ may not be representative of the potential KPI distribution for the target population.

This motivates the use of randomized telemetry $\mathcal{D}^{\mathrm{rnd}}$. In fact, under the randomized regime, the action $A$ is independent of the decision context $X=(\tilde{X},U)$ and
\begin{align}
    \label{eq:interventional_distribution_identity}
    P_{Y\mid \tilde{X}=\tilde{x},A=a,S=1}
    =
    P_{Y(a)\mid \tilde{X}=\tilde{x}}.
\end{align}

Note that the coverage guarantee in \eqref{eq:marginal_coverage_potential_multi_action} can be satisfied by a counterfactual analysis procedure that always returns the entire KPI space $\mathcal{Y}$. Despite providing coverage, however, such a procedure is uninformative. Therefore, the quality of the counterfactual prediction sets $\Gamma_a(\tilde{X},\mathcal{D})$ is measured by their average size. To define this size for different types of KPIs, let $\mu$ be a reference measure on the KPI space $\mathcal{Y}$; for instance, $\mu$ may be the Lebesgue measure for continuous KPIs and the counting measure for discrete KPIs. For each action $a\in\mathcal{A}$, we define the inefficiency of the prediction set $\Gamma_a(\cdot)$ as
\begin{align}
    \label{eq:prediction_set_efficiency}
    \Phi(\Gamma_a)
    =
    \mathbb{E}\Bigl[
        \mu\bigl(\Gamma_a(\tilde{X},\mathcal{D})\bigr)
    \Bigr],
\end{align}
where the expectation is taken over the calibration data $\mathcal{D}$ and over an independent test unit sampled according to \eqref{eq:target_test_distribution}. Among procedures satisfying the coverage guarantee in \eqref{eq:marginal_coverage_potential_multi_action}, prediction sets with smaller values of $\Phi(\Gamma_a)$ correspond to more informative counterfactual answers, because they restrict the plausible values of $Y(a)$ to a smaller region of $\mathcal{Y}$.

Finally, we note that potential-outcome prediction sets
$\{\Gamma_a(\tilde{X},\mathcal{D})\}_{a\in\mathcal{A}}$ can also be used to
compare the effect of different actions. For two candidate actions $a,b\in\mathcal{A}$ and a
logged context $\tilde{X}$, the individual treatment effect \cite{hernan2010causal}, or action effect, is
defined as
\begin{align}
    \label{eq:action_effect}
    \Delta(a,b)
    =
    Y(a)-Y(b),
\end{align}
which quantifies the KPI change that would result from selecting action $a$
instead of action $b$. Given prediction sets for $Y(a)$ and $Y(b)$, a natural
prediction set for $\Delta(a,b)$ is obtained by taking the Minkowski difference
of the two sets
\begin{align}
    \label{eq:action_effect_set}
    \Gamma_{\Delta(a,b)}(\tilde{X},\mathcal{D})
    =
    \left\{
        y_a-y_b
        :
        y_a\in\Gamma_a(\tilde{X},\mathcal{D}),
        \;
        y_b\in\Gamma_b(\tilde{X},\mathcal{D})
    \right\}.
\end{align}
Indeed, if $\Gamma_a(\tilde{X},\mathcal{D})$ and
$\Gamma_b(\tilde{X},\mathcal{D})$ satisfy
\eqref{eq:marginal_coverage_potential_multi_action} for actions $a$ and
$b$, respectively, then, by the union bound,
\begin{align}
    \Pr\bigl[
        \Delta(a,b)\in
        \Gamma_{\Delta(a,b)}(\tilde{X},\mathcal{D})
    \bigr]
    \geq
    1-2\alpha.
\end{align}
Thus, prediction sets for individual potential KPIs naturally induce valid
prediction sets for action effects, enabling the comparison of candidate control
decisions under uncertainty \cite{lei2021}.

\section{Background}
\label{sec:background}
In this section, we review the statistical tools underlying the proposed methodology. First, we introduce Conformal Counterfactual KPI Estimation (CCKE)~\cite{hou2025if}, which uses weighted conformal prediction (WCP)~\cite{tibshirani2019,lei2021} to construct prediction sets for potential KPIs under conditional ignorability.  We then review General Synthetic-Powered Inference (GESPI) \cite{bashari2025gespi}, a general framework for statistical inference that combines abundant, potentially biased data with a limited amount of unbiased data to improve efficiency while limiting the worst-case degradation in
statistical validity.
\subsection{Conformal Counterfactual KPI Estimation}
\label{subsec:wcp-no-confounding}
Conformal Counterfactual KPI Estimation (CCKE) \cite{hou2025if} is a counterfactual inference method that targets the coverage requirement \eqref{eq:marginal_coverage_potential_multi_action} under the assumption of no hidden confounding. For the model introduced in Section~\ref{sec:problem-definition}, this assumption is verified when all variables influencing action selection are observed in the logged telemetry, i.e., $X=\tilde{X}$, so that the strong ignorability condition in~\eqref{eq:strong_ignorability} holds. As shown next, under this condition, CCKE exploits WCP to construct prediction sets for potential outcomes that satisfy the marginal coverage guarantee in~\eqref{eq:marginal_coverage_potential_multi_action}. 
For the remainder of this subsection, we fix an arbitrary target action $a\in\mathcal{A}$ and, to simplify notation, we suppress the dependence on $a$ whenever it is clear from the context.

Define the subset of the observational telemetry samples $\mathcal{D}^{\rm obs}$ for which the selected action is $a$ as
\begin{align}
 \label{eq:action_specific_dataset}
    \mathcal{D}_{a}
    =
    \left\{
        (\tilde{X}_i,Y_i)
        :
        (\tilde{X}_i,A_i,Y_i,S_i)\in\mathcal{D}^{\rm obs},
        A_i=a
    \right\}.
\end{align}

CCKE first partitions $\mathcal{D}_{a}$ into a training set $\mathcal{D}^{\mathrm{tr}}_{a}$ and a calibration set $\mathcal{D}^{\mathrm{cal}}_{a}$. The training set $\mathcal{D}^{\mathrm{tr}}_{a}$ is used to fit lower and upper conditional quantile regressors, $\hat{q}_{\alpha/2}(\tilde{X})$ and $\hat{q}_{1-\alpha/2}(\tilde{X})$, which estimate the $\alpha/2$ and $1-\alpha/2$ conditional quantiles of $Y(a)$ given $\tilde{X}$, respectively. For a context $\tilde{X}$, these regressors define the uncalibrated interval for the KPI $Y(a)$ as
\begin{equation}
     \tilde{\Gamma}_a(\tilde{X})
    =
    \left[
        \hat{q}_{\alpha/2}
        \bigl(\tilde{X}\bigr),
        \;
        \hat{q}_{1-\alpha/2}
        \bigl(\tilde{X}\bigr)
    \right].
    \label{eq:uncalibrated_interval}
\end{equation}

In general, the set predictor $\tilde{\Gamma}_a(\cdot)$ in~\eqref{eq:uncalibrated_interval} is not calibrated and does not satisfy the desired marginal coverage guarantee. Therefore, CCKE uses the calibration set $\mathcal{D}^{\mathrm{cal}}_{a}$ to adjust the predictor $\tilde{\Gamma}_a(\cdot)$ by a data-dependent widening parameter $\gamma_{a}(\tilde{x})$.

Let $\mathcal{I}^{\rm cal}_a$ denote the index set of the samples in $\mathcal{D}^{\rm cal}_a$ and for each calibration 
sample $i \in \mathcal{I}^{\rm cal}_a$, define the nonconformity score
\begin{equation}
        V_i=V(\tilde{X}_i,Y_i)
    =
    \max
    \Bigl\{
        \hat{q}_{\alpha/2}
        \bigl(\tilde{X}_i\bigr)
        -
        Y_i,
        Y_i
        -
        \hat{q}_{1-\alpha/2}
        \bigl(\tilde{X}_i\bigr)
    \Bigr\}.
    \label{eq:nonconformity_score_cal}
\end{equation}

The nonconformity score $V_i$ is the largest of the lower- and
upper-tail residuals. It is positive when $Y_i$ lies outside the
uncalibrated interval $\tilde{\Gamma}_a(\tilde{X}_i)$ and may be negative when $Y_i$ lies inside it. 

Based on the calibration nonconformity scores $\{V_i\}_{i \in \mathcal{I}^{\rm cal}_a}$, 
CCKE sets the adjustment factor as the $(1-\alpha)$-quantile of a weighted empirical 
distribution of the calibration scores. The weighting accounts for the covariate shift between the calibration and test 
distributions. Specifically, the calibration covariates in $\mathcal{D}^{\mathrm{cal}}_{a}$ are distributed according to the observational law $P_{\tilde{X}\mid A=a,S=0}$, whereas the guarantee in~\eqref{eq:marginal_coverage_potential_multi_action} is given for a test covariate distributed according to $P_{\tilde{X}}$. CCKE corrects this distributional mismatch using importance weights defined by the likelihood ratio
\begin{align}
    \label{eq:wcp_likelihood_ratio}
    r(\tilde{x})
    =
    \frac{
        dP_{\tilde{X}}
    }{
        dP_{\tilde{X}\mid A=a,S=0}
    }(\tilde{x})
    \propto
    \frac{1}{
        p_{\mathrm{obs}}(a\mid\tilde{X}=\tilde{x})
    }.
\end{align}

Accordingly, contexts for which action $a$ is rarely selected under the observational policy receive larger weights, compensating for their under-representation in the action-specific calibration set.
When the observational policy $p_{\mathrm{obs}}(a\mid\tilde{X})$ is unknown, the likelihood ratio in~\eqref{eq:wcp_likelihood_ratio} must be estimated from data.

For a given context $\tilde{X}$,  the likelihood ratios are normalized jointly over the calibration samples and the test point, and each calibration sample $i\in\mathcal{I}^{\mathrm{cal}}_a$ is assigned the weight
\begin{equation}
     w_i(\tilde{X})
    =
    \frac{
        r(\tilde{X}_i)
    }{
        \sum_{j\in\mathcal{I}^{\mathrm{cal}}_a}
        r(\tilde{X}_j)
        +
        r(\tilde{X})
    },
    \label{eq:normalized_weights_cal}
\end{equation}
while the test point is assigned the weight
\begin{equation}
      w(\tilde{X})
    =
    \frac{
        r(\tilde{X})
    }{
        \sum_{j\in\mathcal{I}^{\mathrm{cal}}_a}
        r(\tilde{X}_j)
        +
        r(\tilde{X})
    }.
    \label{eq:normalized_weights_test}
\end{equation}

These weights define the weighted empirical distribution of the nonconformity 
scores
\begin{equation}
     \hat{P}_{V}^{\tilde{X}}
    =
    \sum_{i\in\mathcal{I}^{\mathrm{cal}}_a}
    w_i(\tilde{X})\,
    \delta_{V_i}
    +
    w(\tilde{X})\,
    \delta_{+\infty},
    \label{eq:weighted_empirical}
\end{equation}

where $\delta_v$ denotes the Dirac delta function at $v$, and the additional mass 
at $\infty$ represents the unknown nonconformity score of the test point. 

The correction $\gamma_a(\tilde{X})$ is chosen as the $(1-\alpha)$-quantile of  the distribution~\eqref{eq:weighted_empirical}
\begin{equation}
    \gamma_a(\tilde{X})=
     Q_{1-\alpha}
    \left(
        \hat{P}_{V}^{\tilde{X}}
    \right),
    \label{eq:correction}
\end{equation}
where, for a distribution $P$, $Q_{\tau}(P)$ denotes the $\tau$-quantile of $P$. 

For a network context $\tilde{X}$ and the target action $a$, CCKE outputs the prediction set for the potential outcome $Y(a)$
\begin{equation}
    \Gamma^{\text{CCKE}}_{a}(\tilde{X}) 
    = \Bigl[\hat{q}_{\alpha/2}(\tilde{X}) - \gamma_a(\tilde{X}),\;
             \hat{q}_{1-\alpha/2}(\tilde{X}) + \gamma_a(\tilde{X})\Bigr].
    \label{eq:calibrated_interval}
\end{equation}

Assume that the ignorability condition in~\eqref{eq:strong_ignorability}
holds, that the likelihood ratio in~\eqref{eq:wcp_likelihood_ratio} is
known exactly, and that there exists a constant $c>0$
such that
\begin{align}
    p_{\mathrm{obs}}(a\mid\tilde{X}=\tilde{x})
    \geq c
\end{align}
for all $\tilde{x}$ in the support of $P_{\tilde{X}}$. Under the sampling assumptions of Section~\ref{sec:problem-definition}, the prediction set
$\Gamma_a^{\mathrm{CCKE}}(\cdot)$ satisfies the marginal coverage guarantee
in~\eqref{eq:marginal_coverage_potential_multi_action} \cite{hou2025if}. When the likelihood ratio is estimated, the exact finite-sample guarantee
may no longer hold, and the coverage error depends on the quality of the
ratio estimate \cite{lei2021}.

As shown in the motivating example in Figure~\ref{fig:motivating_example}, in the presence of hidden confounding, the ignorability condition in~\eqref{eq:strong_ignorability} is not satisfied in general, and the CCKE prediction set in~\eqref{eq:calibrated_interval} is no longer guaranteed to satisfy the marginal coverage requirement. A direct alternative is to apply CCKE to the randomized telemetry $\mathcal{D}^{\mathrm{rnd}}$, which follows the target potential-outcome distribution. However, because $n_{\mathrm{rnd}}\ll n_{\mathrm{obs}}$, the resulting prediction sets may be inefficient.

\subsection{General Synthetic-Powered Inference}
\label{subsec:gespi}

General Synthetic-Powered Inference (GESPI) \cite{bashari2025gespi} is a framework for statistically
valid inference with auxiliary data. It is motivated by settings in which
reliable real-world data from the target distribution are scarce, while large
auxiliary datasets are easier to obtain, for example from simulators or deep
generative models. Auxiliary data can improve statistical efficiency by
increasing the effective sample size, but they may also be biased or
distributionally mismatched, thereby invalidating the guarantees of the
inference procedure. GESPI addresses this issue by using the reliable data as a
validity ``guardrail'' while exploiting the auxiliary data when they are
informative.

More specifically, let $\mathcal{D}_{\rm rel}$ denote reliable real-world data
sampled from the target distribution $P$, and let $\mathcal{D}_{\rm aux}$ denote
a larger auxiliary dataset sampled from a possibly different distribution $Q$.
GESPI wraps around a base inference procedure  $\mathrm{Alg}_{\alpha}$, such as a conformal
prediction method or a hypothesis testing procedure, that is
assumed to control a target risk at level $\alpha$ when applied to data sampled
from the target distribution $P$.

To formalize this notion, let $\ell(a,Z)$ denote the loss incurred by an
inference output $a$ when evaluated on an independent target-distribution sample
$Z\sim P$. The form of $\ell$ depends on the inference task. For confidence or
prediction sets, it may correspond to the miscoverage loss; for multiple
testing, it may measure a false-discovery criterion. The validity assumption on
the base procedure is that, when applied to reliable data, its expected loss is
at most $\alpha$, i.e.,
\begin{align}
    \label{eq:base_risk_validity}
    R(\mathrm{Alg}_{\alpha},P)
    =
    \mathbb{E}
    \left[
        \ell
        \left(
            \mathrm{Alg}_{\alpha}(\mathcal{D}_{\rm rel}),
            Z
        \right)
    \right]
    \leq
    \alpha,
\end{align}
where the expectation is taken over both the reliable dataset
$\mathcal{D}_{\rm rel}$ and the independent test sample $Z\sim P$.

The main principle behind GESPI is to leverage the validity of the base
procedure while safely incorporating auxiliary data. To this end, GESPI
aggregates three outputs of the base procedure:
\begin{itemize}
    \item The reliable-data output
    $\mathrm{Alg}_{\alpha}(\mathcal{D}_{\rm rel})$, which uses only the reliable
    data at the nominal error level $\alpha$.

    \item The auxiliary-powered output
    $\mathrm{Alg}_{\alpha}(\mathcal{D}_{\rm rel}\cup\mathcal{D}_{\rm aux})$,
    which uses both the reliable and auxiliary data at the nominal error level
    $\alpha$.

    \item The guardrail output
    $\mathrm{Alg}_{\alpha+\epsilon}(\mathcal{D}_{\rm rel})$, which uses only the
    reliable data at the relaxed error level $\alpha+\epsilon$. Here,
    $\epsilon>0$ is a user-specified tolerance parameter that determines the
    maximum additional error one is willing to allow in order to benefit from
    the auxiliary data.
\end{itemize}

To combine these three outputs, GESPI uses the natural ordering of the
statistical objects returned by the base procedure. For example, if the base
procedure returns confidence or prediction sets, then one set is more
conservative than another if it contains it. If the base procedure returns
rejection regions in a hypothesis-testing problem, then one rejection region is
more aggressive than another if it contains more rejected hypotheses. This
ordering allows GESPI to aggregate the three outputs in a way that is adapted to
the inference task.

For base procedures whose outputs are prediction sets, GESPI combines the three sets as
\begin{align}
    \label{eq:GESPI_aggregation}
    \widetilde{\mathrm{Alg}}
    =
    \mathrm{Alg}_{\alpha}(\mathcal{D}_{\rm rel})
    \cap
    \left[
        \mathrm{Alg}_{\alpha}
        \left(
            \mathcal{D}_{\rm rel}
            \cup
            \mathcal{D}_{\rm aux}
        \right)
        \cup
        \mathrm{Alg}_{\alpha+\epsilon}(\mathcal{D}_{\rm rel})
    \right].
\end{align}
The outer intersection ensures that the GESPI set is never larger than the prediction set obtained using only the reliable data at level $\alpha$. The union with the guardrail set ensures that the final set always contains the reliable-data prediction set constructed at level $\alpha+\epsilon$. Consequently, the auxiliary-powered set can reduce the size of the reliable-data set only to the extent permitted by the guardrail.

Under standard loss monotonicity assumptions, GESPI output in \eqref{eq:GESPI_aggregation} satisfies the deterministic sandwich guarantee
\begin{align}
    \label{eq:gespi_sandwich}
       \mathrm{Alg}_{\alpha+\epsilon}(\mathcal{D}_{\rm rel}) 
    \subseteq
    \widetilde{\mathrm{Alg}}
    \subseteq\mathrm{Alg}_{\alpha}(\mathcal{D}_{\rm rel}).
\end{align}
and its risk satisfies
\begin{align}
    \label{eq:gespi_risk_bound}
    R(\widetilde{\mathrm{Alg}},P)
    \leq
    \alpha+\epsilon,
\end{align}
regardless of the auxiliary-data distribution $Q$.

In the following section, we specialize this principle to the telemetry setting considered in this work. In particular, we use GESPI to construct counterfactual prediction sets by safely combining abundant observational telemetry, which may be affected by hidden confounding, with a smaller amount of randomized telemetry.

\section{Confounding-Valid Counterfactual Conformal Inference}
\label{sec:gespi_counterfactual}

We now present Confounding-Valid Counterfactual Conformal Inference (CV-CCI), a general framework for counterfactual estimation of a wireless KPI under hidden confounding. As discussed in Section~\ref{sec:problem-definition}, observational telemetry is abundant but may be confounded because the controller selects actions using the full context $X=(\tilde{X},U)$, while the analyst observes only $\tilde{X}$. Consequently, CCKE applied to observational telemetry alone may fail to satisfy the marginal coverage guarantee in~\eqref{eq:marginal_coverage_potential_multi_action}. In contrast, randomized telemetry is unconfounded because actions are randomized independently of both $\tilde{X}$ and $U$, but such data are scarce because randomized experimentation may require the network to take suboptimal actions. To exploit the large observational dataset without losing the validity provided by randomized data, CV-CCI instantiates GESPI using as a base inference procedure the CCKE construction reviewed in Section~\ref{subsec:wcp-no-confounding}.

For a target action $a\in\mathcal{A}$, define the action-specific subset of randomized telemetry as
\begin{align}
    \mathcal{D}^{\mathrm{rnd}}_{a}
    &=
    \left\{
        (\tilde{X}_i,Y_i)
        :
        (\tilde{X}_i,A_i,Y_i,S_i)\in\mathcal{D}^{\mathrm{rnd}},
        \;
        A_i=a
    \right\},
\end{align}
and the action-specific subset of observational telemetry as
\begin{align}
    \mathcal{D}^{\mathrm{obs}}_{a}
    &=
    \left\{
        (\tilde{X}_i,Y_i)
        :
        (\tilde{X}_i,A_i,Y_i,S_i)\in\mathcal{D}^{\mathrm{obs}},
        \;
        A_i=a
    \right\}.
\end{align}
The observational action-specific dataset $\mathcal{D}^{\mathrm{obs}}_{a}$ is partitioned into a training set $\mathcal{D}^{\mathrm{obs},\mathrm{tr}}_{a}$ and a calibration set $\mathcal{D}^{\mathrm{obs},\mathrm{cal}}_{a}$. Let $\mathcal{I}^{\mathrm{rnd}}_a$ denote the index set of randomized samples in $\mathcal{D}^{\mathrm{rnd}}_{a}$, and let $\mathcal{I}^{\mathrm{obs},\mathrm{cal}}_a$ denote the index set of observational calibration samples in $\mathcal{D}^{\mathrm{obs},\mathrm{cal}}_{a}$.

As in CCKE, the training set $\mathcal{D}^{\mathrm{obs},\mathrm{tr}}_{a}$ is used to fit lower and upper conditional quantile regressors, $\hat{q}_{\alpha/2}(\cdot)$ and $\hat{q}_{1-\alpha/2}(\cdot)$, which define the uncalibrated interval $\tilde{\Gamma}_a(\cdot)$ in~\eqref{eq:uncalibrated_interval}. However, unlike CCKE, CV-CCI calibrates the prediction set $\tilde{\Gamma}_a(\cdot)$ by applying GESPI to the WCP construction in Section~\ref{subsec:wcp-no-confounding}, treating the randomized telemetry $\mathcal{D}^{\mathrm{rnd}}$ as the reliable dataset and the observational telemetry $\mathcal{D}^{\mathrm{obs}}$ as the auxiliary dataset. Specifically, CV-CCI constructs three WCP prediction sets corresponding to the three components of the GESPI.

For the fitted quantile regressors $\hat{q}_{\alpha/2}(\cdot)$ and $\hat{q}_{1-\alpha/2}(\cdot)$, define the calibration nonconformity score evaluated on the observational and on the randomized data as
\begin{align}
    \mathcal{V}^{\mathrm{obs}}
    &=
    \left\{
        V(\tilde{X}_i,Y_i)
        :
        i \in\mathcal{I}^{\mathrm{obs},\mathrm{cal}}_{a}
    \right\},
\end{align}
and
\begin{align}
    \mathcal{V}^{\mathrm{rnd}}
    &=
    \left\{
        V(\tilde{X}_i,Y_i)
        :
        i \in\mathcal{I}^{\mathrm{rnd}}_{a}
    \right\}.
\end{align}

CV-CCI defines two randomized-only WCP sets by adjusting the set predictor $\tilde{\Gamma}_a(\tilde{x})$ using data-dependent widening parameters computed from the weighted empirical distribution of the scores in $\mathcal{V}^{\mathrm{rnd}}$. Since under the randomized regime actions are independent of $(\tilde{X},U)$, conditioning on $A=a$ does not change the distribution of $\tilde{X}$. Hence, the likelihood ratio between the distribution of selection covariates in $\mathcal{D}^{\mathrm{rnd}}_{a}$ and the test covariate distribution is constant. It follows that, for a test context $\tilde{X}$, the normalized WCP weights are equal to $1/(|\mathcal{I}^{\mathrm{rnd}}_a|+1)$, and the resulting weighted empirical distribution of nonconformity scores for randomized telemetry is
\begin{align}
    \label{eq:int_score_distribution}
    \hat{P}^{\,\mathrm{rnd},\tilde{X}}_{V}
    =
    \frac{1}{
        |\mathcal{I}^{\mathrm{rnd}}_a|+1
    }
    \left(
        \sum_{v \in \mathcal{V}^{\mathrm{rnd}}}
        \delta_{v}
        +
        \delta_{+\infty}
    \right).
\end{align}

The empirical distribution $\hat{P}^{\,\mathrm{rnd},\tilde{X}}_{V}$ is used to define the randomized-only prediction set
\begin{align}
    \label{eq:int_alpha_set}
    \Gamma^{\mathrm{rnd}}_{a}(\tilde{X})
    =
    \left[
        \hat{q}_{\alpha/2}(\tilde{X})
        -
        \gamma^{\mathrm{rnd}}_{\alpha}(\tilde{X}),
        \;
        \hat{q}_{1-\alpha/2}(\tilde{X})
        +
        \gamma^{\mathrm{rnd}}_{\alpha}(\tilde{X})
    \right],
\end{align}
where the widening parameter $ \gamma^{\mathrm{rnd}}_{\alpha}(\tilde{X})$ corresponds to the $(1-\alpha)$-quantile of $\hat{P}^{\,\mathrm{rnd},\tilde{X}}_{V}$. 
Similarly, for a guardrail parameter $\epsilon>0$ such that $\alpha+\epsilon<1$, the guardrail set predictor is defined as \begin{align}
    \label{eq:int_alpha_eps_set}
    \Gamma^{\mathrm{guard}}_{a}(\tilde{X})
    =
    \left[
        \hat{q}_{\alpha/2}(\tilde{X})
        -
        \gamma^{\mathrm{rnd}}_{\alpha+\epsilon}(\tilde{X}),
        \;
        \hat{q}_{1-\alpha/2}(\tilde{X})
        +
        \gamma^{\mathrm{rnd}}_{\alpha+\epsilon}(\tilde{X})
    \right],
\end{align}
where the widening parameter $ \gamma^{\mathrm{rnd}}_{\alpha+\epsilon}(\tilde{X})$ corresponds to the $(1-\alpha-\epsilon)$-quantile of  $\hat{P}^{\,\mathrm{rnd},\tilde{X}}_{V}$.

The third set computed by CV-CCI is the auxiliary-powered pooled set. This set corresponds to a WCP set obtained by calibrating the base set predictor $\tilde{\Gamma}_a(\tilde{X})$ using a weighted empirical distribution of both randomized and observational calibration scores. Observational calibration samples are weighted using the likelihood ratio between the test distribution $P_{\tilde{X}}$ and the action-specific observational calibration distribution $P_{\tilde{X}\mid A=a,S=0}$ given by
\begin{align}
    \label{eq:obs_likelihood_ratio_hidden}
    r^{\mathrm{obs}}_a(\tilde{x})
    =
    \frac{
        dP_{\tilde{X}}
    }{
        dP_{\tilde{X}\mid A=a,S=0}
    }(\tilde{x})
    \propto
    \frac{
       1
    }{
        \Pr(A=a\mid \tilde{X}=\tilde{x},S=0)
    },
\end{align}
where $\Pr(A=a\mid \tilde{X}=\tilde{x},S=0)$ denotes the probability of selecting action $a$ under logged context $\tilde{x}$ in the observational regime. In general, the likelihood ratios in~\eqref{eq:obs_likelihood_ratio_hidden} are unknown and must be estimated from data, for example via a probabilistic classifier trained to predict the selected action $A$ from the logged state $\tilde{X}$. For randomized samples, the likelihood ratio is constant, and therefore, the pooled unnormalized weights are defined as
\begin{align}
    \label{eq:pooled_likelihood_ratio}
    \hat{r}^{\mathrm{pool}}_a(\tilde{X}_i)
    =
    \begin{cases}
        \beta,
        &
        i\in\mathcal{I}^{\mathrm{rnd}}_a,
        \\[0.4em]
        \hat{r}^{\mathrm{obs}}_a(\tilde{X}_i),
        &
        i\in\mathcal{I}^{\mathrm{obs},\mathrm{cal}}_a,
    \end{cases}
\end{align}
where $\beta>0$ is a parameter that controls the relative weight assigned to randomized telemetry samples in the pooled auxiliary distribution; in our implementation, we set $\beta=1$. Thus, for a test context $\tilde{X}$, the normalized pooled weights are
\begin{align}
    \label{eq:pooled_weights}
    w_i^{\mathrm{pool}}(\tilde{X})
    =
    \frac{
        \hat{r}_a^{\mathrm{pool}}(\tilde{X}_i)
    }{
        \sum_{j\in\mathcal{I}^{\mathrm{rnd}}_a}
        \hat{r}_a^{\mathrm{pool}}(\tilde{X}_j)
        +
        \sum_{j\in\mathcal{I}^{\mathrm{obs},\mathrm{cal}}_a}
        \hat{r}_a^{\mathrm{pool}}(\tilde{X}_j)
        +
        \beta
    },
\end{align}
for $i\in\mathcal{I}^{\mathrm{rnd}}_a\cup\mathcal{I}^{\mathrm{obs},\mathrm{cal}}_a$, and for the test covariate
\begin{align}
    \label{eq:pooled_test_weight}
    w^{\mathrm{pool}}(\tilde{X})
    =
    \frac{
        \beta
    }{
        \sum_{j\in\mathcal{I}^{\mathrm{rnd}}_a}
        \hat{r}_a^{\mathrm{pool}}(\tilde{X}_j)
        +
        \sum_{j\in\mathcal{I}^{\mathrm{obs},\mathrm{cal}}_a}
        \hat{r}_a^{\mathrm{pool}}(\tilde{X}_j)
        +
        \beta
    }.
\end{align}
These weights define the pooled weighted empirical distribution
\begin{align}
    \label{eq:pooled_score_distribution}
    \hat{P}^{\,\mathrm{pool},\tilde{X}}_{V}
    =
    \sum_{i\in\mathcal{I}_a^{\mathrm{rnd}}}
    w_i^{\mathrm{pool}}(\tilde{X})\,
    \delta_{V(\tilde{X}_i,Y_i)}
    +
    \sum_{i\in\mathcal{I}^{\mathrm{obs},\mathrm{cal}}_a}
    w_i^{\mathrm{pool}}(\tilde{X})\,
    \delta_{V(\tilde{X}_i,Y_i)}
    +
    w^{\mathrm{pool}}(\tilde{X})\,
    \delta_{+\infty}.
\end{align}
Based on the empirical distribution $\hat{P}^{\,\mathrm{pool},\tilde{X}}_{V}$, CV-CCI computes the pooled conformal correction
\begin{align}
    \label{eq:pool_alpha_adjustment}
    \gamma^{\mathrm{pool}}_{\alpha}(\tilde{X})
    =
    Q_{1-\alpha}
    \left(
        \hat{P}^{\,\mathrm{pool},\tilde{X}}_{V}
    \right),
\end{align}
and the corresponding auxiliary-powered prediction set
\begin{align}
    \label{eq:pool_alpha_set}
    \Gamma^{\mathrm{pool}}_{a}(\tilde{X})
    =
    \left[
        \hat{q}_{\alpha/2}(\tilde{X})
        -
        \gamma^{\mathrm{pool}}_{\alpha}(\tilde{X}),
        \;
        \hat{q}_{1-\alpha/2}(\tilde{X})
        +
        \gamma^{\mathrm{pool}}_{\alpha}(\tilde{X})
    \right].
\end{align}
This pooled set may be substantially more efficient than the randomized-only sets $\Gamma^{\mathrm{rnd}}_{a}$ and $\Gamma^{\mathrm{guard}}_{a}$ when the observational data are informative. Nevertheless, because the observational component may remain confounded through $U$, the prediction set in~\eqref{eq:pool_alpha_set} alone does not provide the marginal coverage guarantee in~\eqref{eq:marginal_coverage_potential_multi_action}.

For this reason, CV-CCI obtains the final prediction set by applying the GESPI aggregation rule as
\begin{align}
    \label{eq:cvcci_prediction_set}
    \Gamma^{\mathrm{CV\text{-}CCI}}_a(\tilde{X})
    =
    \Gamma^{\mathrm{rnd}}_{a}(\tilde{X})
    \cap
    \left(
        \Gamma^{\mathrm{pool}}_{a}(\tilde{X})
        \cup
        \Gamma^{\mathrm{guard}}_{a}(\tilde{X})
    \right).
\end{align}
Following the GESPI rationale, the guardrail set $\Gamma^{\mathrm{guard}}_{a}(\tilde{X})$ limits the deterioration in marginal coverage to the user-specified tolerance $\epsilon$.

\begin{proposition}[Finite-sample coverage of CV-CCI]
\label{prop:cvcci_coverage}
Fix an action $a\in\mathcal A$ and let $\alpha,\epsilon>0$ satisfy
$\alpha+\epsilon<1$. Under the setting and sampling assumptions of
Section~\ref{sec:problem-definition}, the CV-CCI prediction set in
\eqref{eq:cvcci_prediction_set} satisfies
\begin{align}
    \label{eq:cvcci_sandwich}
    \Gamma^{\mathrm{guard}}_a(\tilde{x})
    \subseteq
    \Gamma^{\mathrm{CV\text{-}CCI}}_a(\tilde{x})
    \subseteq
    \Gamma^{\mathrm{rnd}}_a(\tilde{x})
\end{align}
for every logged context $\tilde{x}$. Consequently,
\begin{align}
    \label{eq:cvcci_prop_coverage}
    \Pr\!\left[
        Y(a)\in
        \Gamma^{\mathrm{CV\text{-}CCI}}_a(\tilde{X})
    \right]
    \geq
    1-\alpha-\epsilon,
\end{align}
where the probability is taken jointly over the telemetry data and an
independent test unit distributed according to
\eqref{eq:target_test_distribution}.
\end{proposition}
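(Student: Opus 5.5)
The plan has two parts. First we prove the deterministic sandwich \eqref{eq:cvcci_sandwich} using the nested structure of the three intervals. Then we transfer the exchangeability-based coverage of the guardrail set to the CV-CCI set.

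\emph{Sandwich.} All three sets in \eqref{eq:cvcci_prediction_set} have the same form: they are centered on $\tilde{\Gamma}_a(\tilde{x})$ and widened symmetrically by $\gamma^{\mathrm{rnd}}_{\alpha}(\tilde{x})$, $\gamma^{\mathrm{rnd}}_{\alpha+\epsilon}(\tilde{x})$ and $\gamma^{\mathrm{pool}}_{\alpha}(\tilde{x})$, respectively. The map $\tau\mapsto Q_\tau(\hat{P}^{\,\mathrm{rnd},\tilde{x}}_{V})$ is nondecreasing, so $1-\alpha-\epsilon<1-\alpha$ gives $\gamma^{\mathrm{rnd}}_{\alpha+\epsilon}(\tilde{x})\le\gamma^{\mathrm{rnd}}_{\alpha}(\tilde{x})$. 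Hence $\Gamma^{\mathrm{guard}}_a(\tilde{x})\subseteq\Gamma^{\mathrm{rnd}}_a(\tilde{x})$.

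The right inclusion $\Gamma^{\mathrm{CV\text{-}CCI}}_a(\tilde{x})\subseteq\Gamma^{\mathrm{rnd}}_a(\tilde{x})$ is immediate from the outer intersection. For the left inclusion, note that $\Gamma^{\mathrm{guard}}_a(\tilde{x})\subseteq \Gamma^{\mathrm{pool}}_a(\tilde{x})\cup\Gamma^{\mathrm{guard}}_a(\tilde{x})$ and $\Gamma^{\mathrm{guard}}_a(\tilde{x})\subseteq\Gamma^{\mathrm{rnd}}_a(\tilde{x})$, so $\Gamma^{\mathrm{guard}}_a(\tilde{x})$ lies in their intersection. This holds for every realization of the data and every $\tilde{x}$. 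Nothing about the observational data or the estimated ratios $\hat r^{\mathrm{obs}}_a$ enters, which is why the result is robust to arbitrary confounding.

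\emph{Coverage.} By the left inclusion,
\begin{align}
\Pr\bigl[Y(a)\in\Gamma^{\mathrm{CV\text{-}CCI}}_a(\tilde{X})\bigr]\ge\Pr\bigl[Y(a)\in\Gamma^{\mathrm{guard}}_a(\tilde{X})\bigr],
\end{align}
so it suffices to show that the guardrail set has coverage at least $1-\alpha-\epsilon$. This is the main step. We condition on the training split $\mathcal{D}^{\mathrm{obs},\mathrm{tr}}_a$, so that $\hat q_{\alpha/2},\hat q_{1-\alpha/2}$ and hence the score $V$ are fixed functions. The training split is independent of $\mathcal{D}^{\mathrm{rnd}}$ and of the test unit.

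Next we also condition on the randomized action labels, which fixes $m=|\mathcal{I}^{\mathrm{rnd}}_a|$. By \eqref{eq:int_policy} and the factorization \eqref{eq:potential_sample}, each randomized sample with $A_i=a$ has $(\tilde{X}_i,U_i)\sim P_X$ independently of $A_i$. By SUTVA \eqref{eq:sutva}, $Y_i=Y_i(a)$, so $(\tilde{X}_i,Y_i)$ is distributed as $P_{\tilde{X},Y(a)}$, consistent with \eqref{eq:interventional_distribution_identity}. These samples are i.i.d. and independent of the test unit drawn from \eqref{eq:target_test_distribution}. Thus the $m+1$ scores $V_1,\dots,V_m,V(\tilde{X},Y(a))$ are exchangeable.

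The next fact is the standard one: $Y(a)\in\Gamma^{\mathrm{guard}}_a(\tilde{X})$ holds if and only if $V(\tilde{X},Y(a))\le\gamma^{\mathrm{rnd}}_{\alpha+\epsilon}(\tilde{X})$, since $V$ is the max of the two tail residuals. Here $\gamma^{\mathrm{rnd}}_{\alpha+\epsilon}(\tilde{X})$ is the $\lceil(1-\alpha-\epsilon)(m+1)\rceil$-th smallest of $\{V_i\}\cup\{+\infty\}$. The usual rank argument for exchangeable scores (ties only help) then gives conditional coverage at least $1-\alpha-\epsilon$. When $m$ is small this threshold may be $+\infty$, in which case coverage is trivial. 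Integrating over the conditioning variables yields \eqref{eq:cvcci_prop_coverage}.

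The only delicate points are handling the random size $m$ by conditioning on the labels (including $m=0$, where the set is all of $\mathcal{Y}$) and the quantile convention under ties. I expect both to be routine.
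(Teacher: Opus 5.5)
Your proposal is correct and follows the paper's proof. Monotonicity of the randomized weighted quantile gives $\Gamma^{\mathrm{guard}}_a\subseteq\Gamma^{\mathrm{rnd}}_a$, the aggregation rule \eqref{eq:cvcci_prediction_set} yields the sandwich, and split-conformal coverage of the guardrail set carries over to CV-CCI through the left inclusion. Your extra conditioning on the observational training split and on the randomized action labels (fixing $m=|\mathcal{I}^{\mathrm{rnd}}_a|$, including $m=0$) spells out what the paper compresses into its one-line appeal to exchangeability and standard split conformal prediction.
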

\begin{proof}
Since $\alpha+\epsilon>\alpha$, the randomized conformal quantiles satisfy
\[
\gamma^{\mathrm{rnd}}_{\alpha+\epsilon}(\tilde{x})
\leq
\gamma^{\mathrm{rnd}}_{\alpha}(\tilde{x}),
\]
and hence
\[
\Gamma^{\mathrm{guard}}_a(\tilde{x})
\subseteq
\Gamma^{\mathrm{rnd}}_a(\tilde{x}).
\]
Therefore, from \eqref{eq:cvcci_prediction_set},
\[
\Gamma^{\mathrm{guard}}_a(\tilde{x})
\subseteq
\Gamma^{\mathrm{CV\text{-}CCI}}_a(\tilde{x})
\subseteq
\Gamma^{\mathrm{rnd}}_a(\tilde{x}).
\]
Under the randomized regime, the action-specific calibration samples and the
test pair $(\tilde{X},Y(a))$ are exchangeable. Thus, standard split conformal
prediction gives
\[
\Pr\!\left[
Y(a)\in\Gamma^{\mathrm{guard}}_a(\tilde{X})
\right]
\geq
1-\alpha-\epsilon.
\]
The result follows from
$\Gamma^{\mathrm{guard}}_a(\tilde{X})
\subseteq
\Gamma^{\mathrm{CV\text{-}CCI}}_a(\tilde{X})$.
\end{proof} 
The guarantee in \eqref{eq:cvcci_prop_coverage} establishes that CV-CCI safely
combines abundant observational telemetry with scarce randomized telemetry
while preserving the validity of counterfactual KPI estimation, up to the
user-specified tolerance $\epsilon$, under arbitrary hidden confounding and
arbitrary errors in the estimated observational propensity scores.

\section{Experiments}

\label{sec:experiments}
In this section, we evaluate CV-CCI and compare it against state-of-the-art counterfactual conformal inference baselines on two representative RAN control tasks: medium access control (MAC)-layer scheduling~\cite{yaacoub2011survey} and handover~\cite{ahmed2013enabling}. The code for reproducing the experiments is available at \href{https://github.com/abdessamedqchohi/Confounding-Valid-Conformal-Counterfactual-Inference}{https://github.com/abdessamedqchohi/Confounding-Valid-Conformal-Counterfactual-Inference}.

\subsection{Baselines}
\label{subsec:experimental-baselines}

We compare CV-CCI to the following baselines:

\begin{itemize}
    \item \textbf{CCKE} \cite{hou2025if}.
    CCKE is the counterfactual conformal KPI estimator reviewed in
    Section~\ref{subsec:wcp-no-confounding}. For each action, it applies
    WCP to action-specific observational telemetry,
    using inverse propensity scores to correct for selection bias in
    the logged data. Its validity guarantee hinges on the conditional
    ignorability assumption, and therefore may not satisfy the coverage requirement \eqref{eq:marginal_coverage_potential_multi_action} under hidden confounding.

    \item \textbf{ Weighted Split Conformal Prediction with Density-Ratio Estimation (wSCP-DR)} \cite{chen2024}.
    wSCP-DR addresses hidden confounding by estimating the  ratio between the densities of randomized and observational
    samples assigned to action $a$, i.e.
    \begin{equation}
        \label{eq:ratio_obs_int}
        r(\tilde{x},y)
        =
        \frac{
            dP_{\tilde{X},Y|S=1,A=a}
        }{
            dP_{\tilde{X},Y|S=0,A=a}
        }(\tilde{x},y).
    \end{equation}
    This ratio \eqref{eq:ratio_obs_int} is estimated by training a probabilistic classifier to distinguish observational from randomized samples. wSCP-DR then applies WCP to both the observational and randomized telemetry, correcting for hidden confounding by reweighting the samples using the estimated density ratios.

      We evaluate the two-stage variants proposed in \cite{chen2024}: \textbf{wSCP-DR Inexact} and \textbf{wSCP-DR Exact}. Both methods first construct WCP intervals by combining observational and randomized data through density-ratio weighting. They then learn an interval predictor by regressing the lower and upper interval endpoints on the logged context, enabling prediction intervals to be generated for new samples. \textbf{wSCP-DR Inexact} directly uses the regressed endpoints as the final prediction interval, making it computationally efficient but without finite-sample coverage guarantees. \textbf{wSCP-DR Exact} instead applies a second split-conformal calibration step to the learned interval predictor, restoring finite-sample validity. This additional calibration requires reserving part of the already limited randomized data, which may reduce statistical efficiency and lead to wider prediction intervals.

    \item \textbf{Guardrail}.
    This baseline corresponds to the guardrail component of
    CV-CCI. The base quantile regressors are trained using observational
    telemetry, while randomized samples are used for calibration at an error level
    $\alpha+\epsilon$. The guardrail predictor satisfies the marginal coverage requirement at the level 
   $ 1-\alpha-\epsilon$
    even under hidden confounding. This baseline isolates the benefit obtained
    by the auxiliary observational component of CV-CCI.
\end{itemize}

\subsection{Medium Access Control-Layer Scheduling}
\label{subsec:scheduling}

\subsubsection{Setup}
\label{subsubsec:scheduling-setup}

\begin{figure}
    \centering
    \includegraphics[width=0.8\textwidth]{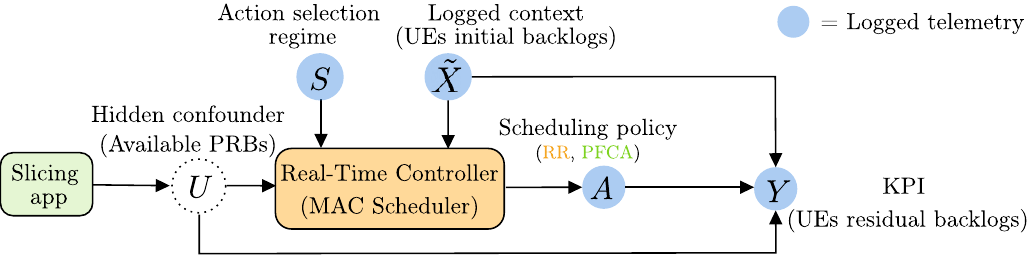}
    \caption{At each scheduling frame, the MAC scheduler selects an action $A\in\{\mathrm{RR},\mathrm{PFCA}\}$ based on the full context  $X=(\tilde{X},U)$, where $\tilde{X}$ contains the logged UE backlogs,  CQIs, and packet delays, and $U$ represents the instantaneous PRB budget assigned to the slice by a slicing application. After the selected scheduling policy is applied, the network produces the residual-backlog KPI $Y$. The PRB budget is available to the scheduler, but is not recorded in the telemetry used for counterfactual inference. Because it may affect both the action selection and the resulting KPI, it acts as a hidden confounder.}
    \label{fig:exp1}
\end{figure}

We consider a resource-allocation problem \cite{yaacoub2011survey}, similar to that studied in \cite{hou2025if}, for an orthogonal frequency-division multiplexing system with $K$ UEs. At each resource-allocation frame, the controller selects one of two scheduling policies,
\begin{equation}
    A\in\left\{\mathrm{RR},\mathrm{PFCA}\right\},
\end{equation}
where $\mathrm{RR}$ denotes round-robin scheduling and $\mathrm{PFCA}$ denotes proportional-fair channel-aware scheduling.

The complete decision context comprises the UE backlog sizes $\{b_k\}_{k=1}^{K}$, the average packet delays in the backlogs $\{d_k\}_{k=1}^{K}$, the channel quality indicators (CQIs) $\{c_k\}_{k=1}^{K}$, and the instantaneous number of available physical resource blocks $N_{\mathrm{PRB}}$. Accordingly,
\begin{equation}
    X
    =
    \left[
        b_{1:K},
        c_{1:K},
        d_{1:K},
        N_{\mathrm{PRB}}
    \right]
    \in
    \mathbb{R}^{3K+1}.
    \label{eq:scheduling-full-context}
\end{equation}
The logged context contains the UE-level measurements, but not the instantaneous PRB budget
\begin{equation}
    \tilde{X}
    =
    \left[
        b_{1:K},
        c_{1:K},
        d_{1:K}
    \right]
    \in
    \mathbb{R}^{3K}.
    \label{eq:scheduling-logged-context}
\end{equation}
Hence, the hidden component of the decision context is the total number of PRBs available for allocation, i.e., $U=N_{\mathrm{PRB}}$. As illustrated in Figure \ref{fig:exp1}, this setting models a hierarchical RAN architecture in which a slice-level resource-management application determines the available PRB budget without exposing it through the scheduler's telemetry interface. The PRB budget nevertheless affects both the preferred scheduling policy and the resulting KPI, and may therefore act as a hidden confounder.

The target KPI for counterfactual inference is the vector of residual UE backlog sizes at the end of the scheduling frame under the alternative scheduling policy $A\in\left\{\mathrm{RR},\mathrm{PFCA}\right\}$. Specifically, let $Y_k(A)$
denote the residual backlog of UE $k$ at the end of the scheduling frame under policy $A$. The inferential target is then the vector
\begin{align}
    Y(A)
    =
    \left[
        Y_1(A),\ldots,Y_K(A)
    \right]
    \in
    \mathbb{R}^{K}.
\end{align}
To construct a prediction set for the multidimensional target KPI $Y(A)$, we adopt an approach similar to that in~\cite{hou2025if}. Specifically, all methods are instantiated using a multidimensional quantile regressor, and the nonconformity score is defined as the maximum of the componentwise nonconformity scores. Calibrating this maximum score yields a prediction set with marginal simultaneous coverage for the entire KPI vector.

Under observational telemetry, the probability of selecting RR scheduling is modeled as
\begin{equation}
  p_{\mathrm{obs}}(\mathrm{RR}\mid\tilde{X},N_{\mathrm{PRB}})
  =
  \lambda\, p_U(N_{\mathrm{PRB}})
  +
  (1-\lambda)\, p_{\tilde X}(\tilde X),
  \label{eq:sched_obs_policy}
\end{equation}
where the mixture coefficient $\lambda\in[0,1]$ controls the relative influence of the unobserved PRB budget and the observed logged context.

The first component captures the dependence on the hidden context. Specifically, denoting the sigmoid function by $\sigma(\cdot)$, we define
\begin{align}
p_U(N_{\mathrm{PRB}})
=
\sigma\!\left(
N_{\mathrm{PRB}}-\bar N_{\mathrm{PRB}}
\right),
\label{eq:pU}
\end{align}
where $\bar N_{\mathrm{PRB}}$ is a reference PRB level.  Equation~\eqref{eq:pU} implies that the RR scheduler is increasingly favored as the number of available PRBs increases.

Following \cite{hou2025if}, the second component is constructed from an estimate $\hat q_k^{\mathrm{tx}}(c_k)$ of the throughput achievable by UE $k$ under RR scheduling, obtained from its CQI $c_k$. Specifically, we define
\begin{align}
p_{\tilde X}(\tilde X)
=
\sigma\!\left(
-\max_k
\bigl(
b_k-\hat q_k^{\mathrm{tx}}(c_k)
\bigr)
\right),
\label{eq:pX}
\end{align}
where $b_k-\hat q_k^{\mathrm{tx}}(c_k)$ represents the estimated residual backlog of UE $k$ under RR scheduling. Consequently, $p_{\tilde X}(\tilde X)$ decreases as the maximum residual backlog increases, reflecting a preference for PFCA when RR is expected to leave a large backlog for at least one UE.

The parameter $\lambda$ directly controls the degree of hidden confounding. When $\lambda=0$, the scheduling policy depends only on the observed context $\tilde X$, so the no-hidden-confounding assumption holds. As $\lambda$ increases, the policy relies progressively more on the unobserved PRB budget. In the limiting case $\lambda=1$, the controller ignores the logged context entirely and bases its decision solely on the hidden variable $N_{\mathrm{PRB}}$, resulting in the strongest level of hidden confounding.

Randomized telemetry is generated using a randomized policy under which RR and PFCA are chosen with the same probability, i.e.,
\begin{equation}
    p_{\mathrm{rnd}}(\mathrm{RR})
    =
    p_{\mathrm{rnd}}(\mathrm{PFCA})
    =
    \frac{1}{2}.
    \label{eq:scheduling-interventional-policy}
\end{equation}

\subsubsection{Results}
\label{subsubsec:scheduling-results}

We evaluate the benchmarked methods using telemetry generated by the Nokia Wireless Suite Simulator \cite{nokia-wireless-suite}. Specifically, we consider a scenario with $K=8$ UEs, where the number of available PRBs is uniformly distributed as $N_{\mathrm{PRB}} \sim \mathcal{U}\{4,\ldots,11\}$. For each UE $k\in\{1,\ldots,K\}$, the CQI $c_k$ is generated according to the standard 3GPP pathloss model specified in Table~7.2.3-1 of TS~36.213 Rel-11 \cite{3gpp}. The initial backlog $b_k$ is independently sampled from a geometric distribution with mean $20\,000$ bits, while the initial delay $d_k$, measured in transmission time intervals (TTIs), is sampled uniformly from $\{0,\ldots,50\}$.

We consider an observational telemetry dataset $\mathcal{D}^{\rm obs}$ containing $n_{\rm obs}=7000$ samples. Of these, $n^{\rm tr}=5000$ samples are used to train the conditional quantile regressors and propensity models, while the remaining $n^{\rm cal}_{\rm obs}=2000$ samples are reserved for calibration. In addition, we consider a limited randomized telemetry dataset $\mathcal{D}^{\rm rnd}$ comprising $n_{\rm rnd}=50$ samples. Using these datasets, we evaluate all methods with a target miscoverage level of $\alpha=0.1$ and a guardrail parameter of $\epsilon=0.025$.

\begin{figure}[]
    \centering
    \includegraphics[width=0.99\textwidth]{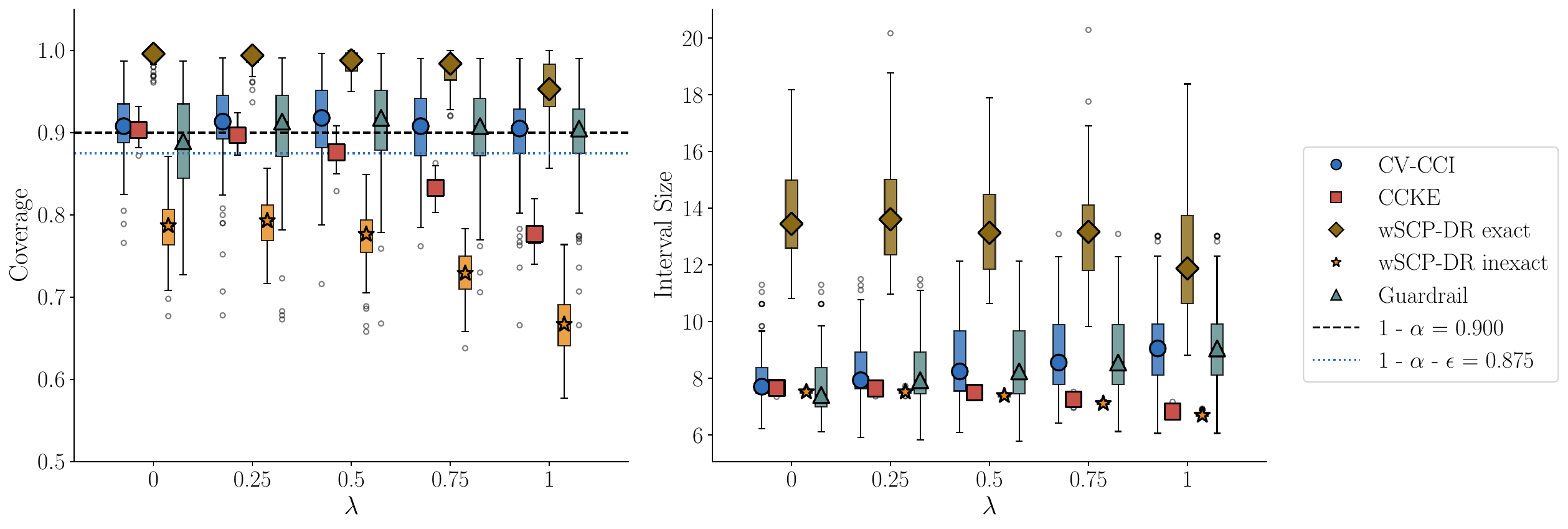}
    \caption{Empirical coverage (left) and prediction-set efficiency (right) for the RR potential outcome as a function of the confounding strength $\lambda$. The dashed line indicates the nominal coverage level $1-\alpha=0.90$, while the dotted line indicates the guardrail level $1-\alpha-\epsilon=0.875$. Boxplots summarize the results over ${n_{\rm folds}} = 100$ folds.}
    \label{fig:sched-rr}
\end{figure}

Figure~\ref{fig:sched-rr} reports empirical coverage and efficiency of the prediction sets for the RR potential outcome produced by the benchmarked methods as the confounding strength varies over $ \lambda\in\left\{ 0, 0.25,  0.5,  0.75,1 \right\}$. The empirical coverage is measured as the fraction of the $n^{\mathrm{te}}=1000$ test instances for which the residual backlogs $\{Y_i^k(a)\}^K_{k=1}$  lie within their respective predicted intervals $\{\Gamma_a^k(\cdot)\}^K_{k=1}$. For action $a$, it is defined as
\begin{equation}
    \widehat{\operatorname{Cov}}_a
    =
    \frac{1}{n^{\mathrm{te}}}
    \sum_{i=1}^{n^{\mathrm{te}}}
    \prod_{k=1}^{K}
    \mathds{1}
    \left\{
        Y_i^k(a)
        \in
        \Gamma_a^k(\tilde{X}_i)
    \right\}.
    \label{eq:scheduling-empirical-coverage}
\end{equation}
Efficiency is measured by the average size of the predicted backlog intervals $\{\Gamma_a^k(\cdot)\}^K_{k=1}$, i.e.,
\begin{equation}
    \widehat{\Phi}_a
    =
    \frac{1}{n^{\mathrm{te}}}
    \sum_{i=1}^{n^{\mathrm{te}}}
    \frac{1}{K}
    \sum_{k=1}^{K}
    \mu
    \left(
        \Gamma_a^k(\tilde{X}_i)
    \right),
    \label{eq:scheduling-empirical-efficiency}
\end{equation}
where $\mu(\cdot)$ denotes the Lebesgue measure. 

These results highlight the effect of hidden confounding on the benchmarked methods. When $\lambda=0$, and the decision policy does not depend on the hidden PRB budget $N_{\rm PRB}$, CCKE, wSCP-DR Exact, CV-CCI, and the Guardrail attain coverage close to or above the nominal level $1-\alpha=0.9$, whereas wSCP-DR Inexact already exhibits substantial undercoverage.

As $\lambda$ increases, the coverage of the methods operating under the no-hidden confounding assumption deteriorates markedly. In particular, the empirical coverage of CCKE decreases from approximately $0.91$ at $\lambda=0$ to approximately $0.78$ at $\lambda=1$. The coverage of wSCP-DR Inexact follows a similar trend, decreasing from approximately $0.79$ to approximately $0.67$. Although both methods produce comparatively narrow prediction sets, their efficiency is accompanied by increasingly severe undercoverage. Hence, the reduction in interval width should not be interpreted as an improvement in statistical efficiency.

By contrast, CV-CCI, Guardrail, and wSCP-DR Exact maintain empirical coverage above the nominal level for all values of $\lambda$. However, while the coverage achieved by CV-CCI and Guardrail remains close to the nominal level, wSCP-DR Exact exhibits noticeable overcoverage, resulting in substantially wider prediction sets. This loss in efficiency stems from the additional data split required to restore finite-sample validity. The resulting reduction in the effective calibration sample is particularly detrimental in the setting considered here, where randomized data are severely limited. Overall, these results show that CV-CCI and the Guardrail achieve the most favorable balance between robustness to hidden confounding and prediction-set efficiency.

To further assess the benefit of incorporating observational telemetry, Table~\ref{tab:sched-fails-pfca} compares the empirical failure rates of CV-CCI and Guardrail for the PFCA potential outcome. The empirical failure rate is defined as the fraction of the $n_{\rm folds}=100$ folds in which empirical coverage falls below the guardrail level $1-\alpha-\epsilon=0.875$, i.e.,
\begin{equation}
    \widehat{F}_a
    =
    \frac{1}{n_{\rm folds}}
    \sum_{i=1}^{n_{\rm folds}}
    \mathds{1}
    \left\{
        \widehat{\operatorname{Cov}}_{a,i}
        <
        1-\alpha-\epsilon
    \right\},
    \label{eq:fold-failure-rate}
\end{equation}
where $\widehat{\operatorname{Cov}}_{a,i}$ denotes the empirical coverage for action $a$ in fold $i$. Whereas average coverage summarizes performance across all data splits, the empirical failure rate measures how often a method fails to attain the prescribed guardrail coverage level.  This distinction is practically relevant because, although marginal coverage cannot be guaranteed for every realization of the limited randomized calibration dataset, practitioners would prefer a method that consistently attains the desired coverage level across different realizations of the calibration dataset.

\begin{table}[]
    \centering
    \caption{Empirical failure rates of CV-CCI and Guardrail for the PFCA potential outcome.}
    \label{tab:sched-fails-pfca}
    \begin{tabular}{lcc}
        \toprule
        \textbf{Confounding ($\lambda$)} & \textbf{CV-CCI} & \textbf{Guardrail}  \\
        \midrule
        0.00 & 30\% & 47\%  \\
        0.25 & 23\% & 38\%  \\
        0.50 & 24\% & 41\%  \\
        0.75 & 28\% & 48\%  \\
        1.00 & 26\% & 41\%  \\
        \bottomrule
    \end{tabular}
\end{table}

The results in Table~\ref{tab:sched-fails-pfca} show that CV-CCI consistently achieves a lower empirical failure probability than Guardrail across all considered confounding levels. In the absence of hidden confounding, i.e., at $\lambda=0$,  Guardrail falls below the coverage level $1-\alpha-\epsilon$ in $47\%$ of the folds, compared with $30\%$ for CV-CCI. Under maximum confounding, i.e., at $\lambda=1$, the corresponding failure probabilities are $41\%$ for Guardrail and $26\%$ for CV-CCI.

We next investigate the effect of the user-specified tolerance $\epsilon$, which determines the maximum additional miscoverage allowed beyond the nominal miscoverage level $\alpha$ when observational telemetry is incorporated as auxiliary data. Although the coverage guarantee in~\eqref{eq:cvcci_prop_coverage} holds for any admissible value of $\epsilon$, its choice directly affects both the empirical coverage and the width of the resulting prediction intervals.

We fix the hidden confounding strength at $\lambda=0.25$ and evaluate CV-CCI and Guardrail for $\epsilon\in\{0.01,0.025,0.05,0.075,0.1\}$, keeping all other experimental parameters unchanged. Figure~\ref{fig:sweep_eps_PFCA} reports the empirical coverage and prediction-interval efficiency for the PFCA potential outcome $Y(\mathrm{PFCA})$.

\begin{figure}[t]
    \centering
    \includegraphics[width=\columnwidth]{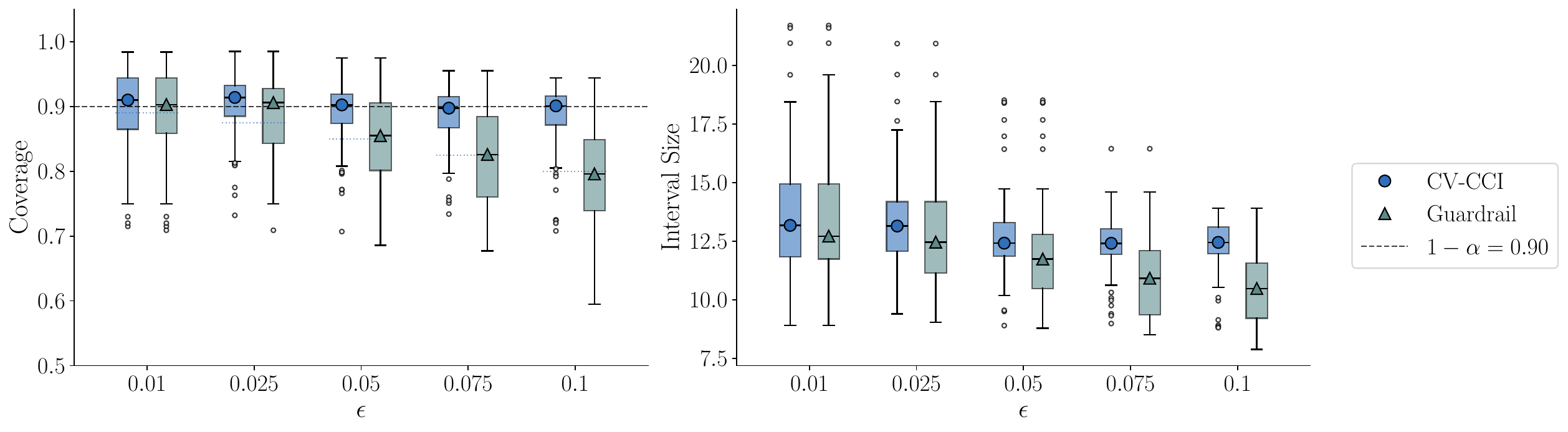}
    \caption{Empirical coverage (left) and efficiency (right) of CV-CCI and Guardrail for the PFCA potential outcome $Y(\mathrm{PFCA})$ as a function of the miscoverage tolerance $\epsilon$.}
    \label{fig:sweep_eps_PFCA}
\end{figure}

Both prediction-set methods satisfy the marginal coverage guarantee at level $1-\alpha-\epsilon$. However, CV-CCI maintains empirical coverage close to the nominal level $1-\alpha=0.90$ for all considered values of $\epsilon$, whereas the empirical coverage of Guardrail closely tracks the lower guarantee $1-\alpha-\epsilon$ and therefore decreases as $\epsilon$ increases. Moreover, the variability in the empirical coverage of Guardrail increases with $\epsilon$, as reflected by the widening interquartile range. These results show that, by incorporating observational telemetry, CV-CCI mitigates the variability arising from the limited randomized calibration sample while maintaining an empirical coverage level close to the nominal level across the full range of $\epsilon$.

\subsection{Handover}
\label{subsec:handover}

\subsubsection{Setup}
\label{subsubsec:handover-setup}

\begin{figure}[]
    \centering
    \includegraphics[width=0.8\textwidth]{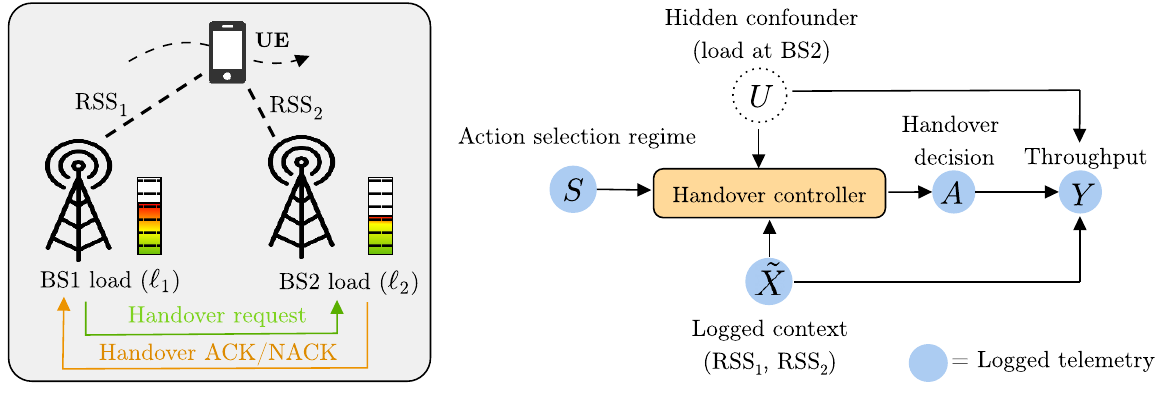}
   \caption{A serving base station (BS1) proposes a handover using the logged RSS histories $\tilde{X}=[\mathrm{RSS}_1^{-\tau},\mathrm{RSS}_2^{-\tau}]$, while a neighboring base station (BS2) accepts or rejects the request based on its instantaneous load $U=\ell_2$. The resulting action $A\in\{0,1\}$ determines whether the UE remains connected to BS1 or is handed over to BS2, and the network produces the corresponding throughput KPI $Y$. Telemetry records $(\tilde{X},A,Y)$ but not the target-cell load $U$. Because $U$ may influence both the handover decision and the resulting throughput, it acts as a hidden confounder in counterfactual KPI inference.}
    \label{fig:exp2}
\end{figure}

We consider the wireless handover scenario illustrated in Figure \ref{fig:exp2}. A UE is initially connected to a serving base station (BS1) and moves within a deployment area that also contains a neighboring base station (BS2). As the UE moves, it periodically reports the received signal strength (RSS) from both BS1 and BS2. At each reporting instant, the network determines whether a handover opportunity exists based on past RSS measurements. When such an opportunity arises, the network decides whether the UE should remain connected to BS1 ($A=0$) or be handed over to BS2 ($A=1$) based on the traffic load at BS2.

Our objective is to answer counterfactual queries about the throughput that would be achieved under either handover decision. Accordingly, for each action $a\in\{0,1\}$, we define the potential outcome $Y(a)$ as the average throughput achieved over a look-ahead horizon of $H$ TTIs if action $a$ were taken.

A handover opportunity is declared if the signal from BS2 has been stronger than that from BS1 throughout the previous $\tau$ reporting steps. Conditional on such an opportunity, the handover decision is modeled as a two-stage process. First, BS1 decides whether to initiate a handover request based on the latest $\tau$ RSS measurements from both cells. Let $\sigma(\cdot)$ denote the sigmoid function, and let $\mathrm{RSS}_1^{-\tau}=[\mathrm{RSS}_1^{t-\tau+1},\dots,\mathrm{RSS}_1^{t}]$ and $\mathrm{RSS}_2^{-\tau}=[\mathrm{RSS}_2^{t-\tau+1},\dots,\mathrm{RSS}_2^{t}]$ denote the latest $\tau$ RSS measurements from BS1 and BS2, respectively. The proposal probability is
\begin{equation}
p_{\mathrm{prop}}
\!\left(
\mathrm{RSS}_1^{-\tau},
\mathrm{RSS}_2^{-\tau}
\right)
=
\sigma\!\left(
\overline{\Delta\mathrm{RSS}}
-
m_p
\right),
\label{eq:ho-prop}
\end{equation}
where
\begin{equation}
\overline{\Delta\mathrm{RSS}}
=
\frac{1}{\tau}
\sum_{i=0}^{\tau-1}
\left(
\mathrm{RSS}_2^{t-i}
-
\mathrm{RSS}_1^{t-i}
\right)
\end{equation}
is the average RSS advantage of BS2 over BS1 during the observation window, and $m_p$ is a reference RSS margin.

If BS1 proposes a handover, BS2 decides whether to accept the request based on its instantaneous load $\ell_2$. The acceptance probability is
\begin{equation}
p_{\mathrm{acc}}(\ell_2)
=
\sigma\!\left(
\lambda
\bigl(
\tau_\ell-\ell_2
\bigr)
\right),
\label{eq:ho-accept}
\end{equation}
where $\tau_\ell$ is a reference load threshold and $\lambda$ controls the influence of the BS2 load on the acceptance decision.

Therefore, under normal network operation, the probability that a handover is executed is
\begin{equation}
p_{\mathrm{obs}}
\!\left(
A=1
\mid
\mathrm{RSS}_1^{-\tau},
\mathrm{RSS}_2^{-\tau},
\ell_2
\right)
=
p_{\mathrm{prop}}
\!\left(
\mathrm{RSS}_1^{-\tau},
\mathrm{RSS}_2^{-\tau}
\right)
p_{\mathrm{acc}}(\ell_2).
\end{equation}

We assume that BS1 has access to RSS measurements required for handover evaluation, but not to the instantaneous load of BS2. This reflects practical operational deployments, where neighboring cells exchange measurement reports to support mobility management, while detailed scheduler state and resource utilization remain local to each base station. As a result, the logged context available to the BS1 is
\begin{equation}
\widetilde X
=
\bigl[
\mathrm{RSS}_1^{-\tau},
\mathrm{RSS}_2^{-\tau}
\bigr]
\in
\mathbb R^{2\tau},
\label{eq:ho-logged-context}
\end{equation}
whereas the handover decision is made using the full context
\begin{equation}
X
=
\bigl[
\mathrm{RSS}_1^{-\tau},
\mathrm{RSS}_2^{-\tau},
\ell_2
\bigr]
\in
\mathbb R^{2\tau+1}.
\label{eq:ho-full-context}
\end{equation}
Therefore, the load $\ell_2$ acts as the hidden confounder, namely $U=\ell_2$, while the parameter $\lambda$ controls the strength of hidden confounding. When $\lambda=0$, the acceptance decision is randomized and independent of $\ell_2$, so the no-hidden-confounding assumption holds. As $\lambda$ increases, the acceptance decision depends increasingly on the unobserved load of BS2, resulting in progressively stronger hidden confounding.

Finally, we assume that randomized telemetry is generated by randomizing the handover decision independently of the observed and hidden contexts, i.e.,
\begin{equation}
p_{\mathrm{rnd}}(A=0)
=
p_{\mathrm{rnd}}(A=1)
=
\frac{1}{2}.
\label{eq:ho-interventional-policy}
\end{equation}

\subsubsection{Results}
\label{subsubsec:handover-results}

We evaluate the benchmarked methods on the wireless handover scenario described in Section~\ref{subsubsec:handover-setup} using SionnaRT ray tracing for the physical- and link-layer simulation \cite{sionna}. We consider an RSS history of length $\tau=5$ and a prediction horizon of $H=10$ TTIs. The observational telemetry dataset comprises $n_{\rm obs}=7000$ samples, of which $n^{\rm tr}=5000$ are used to train the conditional quantile regressors and propensity models, while the remaining $n^{\rm cal}_{\rm obs}=2000$ are reserved for calibration. In addition, we generate a randomized telemetry dataset containing $n_{\rm rnd}=50$ samples. The benchmarked methods are evaluated over $n_{\rm folds}=100$ independent folds using $n^{\rm te}=1000$ test samples. We set the target miscoverage level to $\alpha=0.1$ and the guardrail parameter to $\epsilon=0.025$.

 We evaluate the benchmarked methods for the potential outcome $Y(1)$ by varying the confounding strength through the parameter $\lambda$ in the handover acceptance probability \eqref{eq:ho-accept}, considering the values $\lambda\in\{0,2,4,6,8\}$. As discussed in Section~\ref{subsubsec:handover-setup}, increasing $\lambda$ strengthens the dependence of the observational handover policy on the unobserved target-cell load $\ell_2$, thereby increasing the degree of hidden confounding.

 \begin{figure}[]
    \centering
    \includegraphics[width=0.99\textwidth]{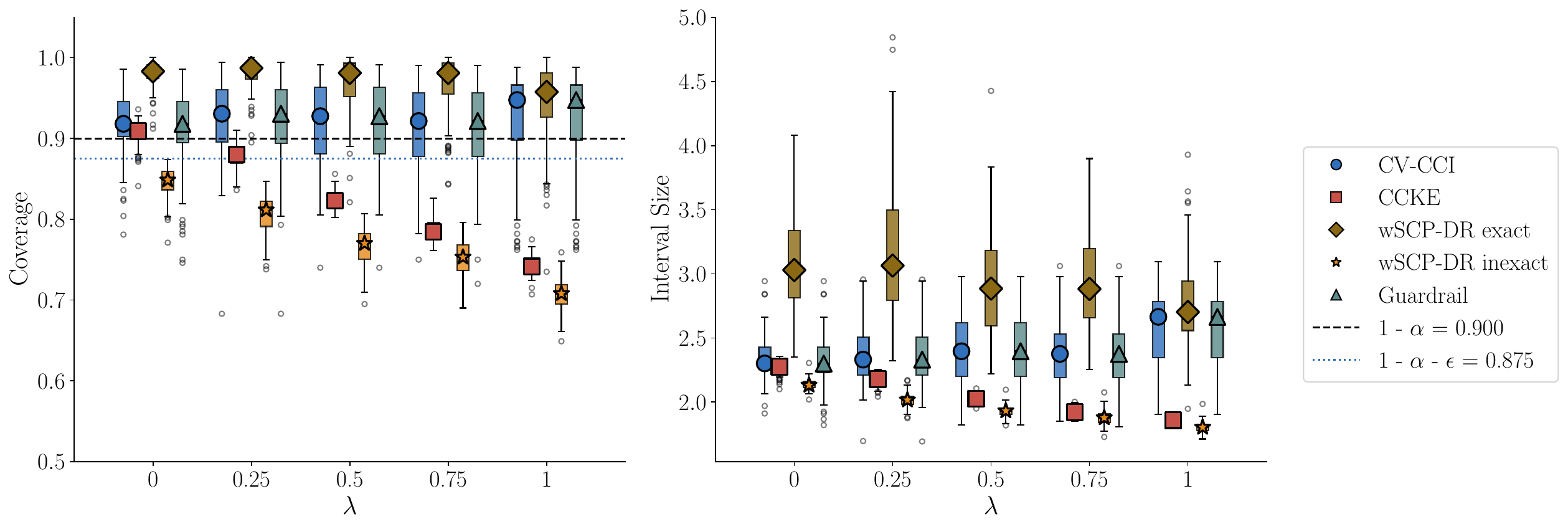}
    \caption{Empirical coverage (left) and prediction-interval efficiency
    (right) for the handover treatment outcome ($Y(1)$) as a function of the
    confounding strength $\lambda$. The dashed line indicates the nominal
    coverage level $1-\alpha=0.90$, while the dotted line indicates the
    guardrail level $1-\alpha-\epsilon=0.875$. Boxplots summarize the
    results over ${n_{\rm folds}} = 100$ folds.}
    \label{fig:ho-t1}
\end{figure}

 For each value of $\lambda$, Figure \ref{fig:ho-t1} reports the empirical coverage and efficiency computed according to \eqref{eq:scheduling-empirical-coverage} and \eqref{eq:scheduling-empirical-efficiency}, with $K=1$ since the target potential outcome is scalar. The results confirm the advantage of the proposed CV-CCI methodology, which consistently achieves the best trade-off between statistical validity and prediction-set efficiency. In particular, when $\lambda=0$ and hidden confounding is absent, all methods except wSCP-DR Inexact achieve an empirical coverage above the nominal level $1-\alpha=0.9$. However, as the confounding strength increases, the coverage of CCKE and wSCP-DR Inexact progressively deteriorates, reaching an empirical coverage level of 0.75 and 0.71 for $\lambda=8$, respectively. Although these methods produce comparatively narrow prediction intervals, they consistently violate the coverage requirement across all values of $\lambda>0$.

By contrast, CV-CCI, Guardrail, and wSCP-DR Exact maintain empirical coverage above the nominal level for all values of $\lambda$. However, while the coverage achieved by CV-CCI and Guardrail remains close to the nominal level, wSCP-DR Exact exhibits noticeable overcoverage, resulting in substantially wider prediction intervals due to its data inefficiency. Overall, CV-CCI achieves the most favorable trade-off between robustness to hidden confounding and prediction-set efficiency, matching the robustness of Guardrail while producing substantially narrower prediction intervals than wSCP-DR Exact.

\begin{table}[]
    \centering
    \caption{Empirical failure rates of CV-CCI and Guardrail for the no-handover potential outcome.}
    \label{tab:ho-fails-y0}
    \begin{tabular}{lcc}
        \toprule
        \textbf{Confounding ($\lambda$)} & \textbf{CV-CCI} & \textbf{Guardrail} \\
        \midrule
        0 & 9\%  & 18\% \\
        2 & 7\%  & 28\% \\
        4 & 12\% & 25\% \\
        6 & 10\% & 18\% \\
        8 & 13\% & 40\% \\
        \bottomrule
    \end{tabular}
\end{table}

Although CV-CCI and Guardrail provide similar empirical mean coverage and efficiency, they do so with markedly different variability. This is illustrated in Table~\ref{tab:ho-fails-y0}, which reports the empirical failure probability of CV-CCI and Guardrail for the potential outcome $Y(0)$. The empirical failure probability is defined as the fraction of the $n_{\rm folds}=100$ folds for which the empirical coverage falls below the guardrail level $1-\alpha-\epsilon=0.875$,  and it is defined as in \eqref{eq:fold-failure-rate}. The results show that CV-CCI consistently achieves a lower empirical failure probability than Guardrail across all confounding levels. In the absence of hidden confounding ($\lambda=0$), Guardrail violates the guardrail level in $18\%$ of the folds, compared with only $9\%$ for CV-CCI. As the confounding strength increases, this gap becomes even more pronounced. Under the strongest confounding level, Guardrail fails to satisfy the prescribed coverage in $40\%$ of the folds, whereas CV-CCI reduces this probability to only $13\%$.

\begin{figure}[t]
    \centering
    \includegraphics[width=\columnwidth]{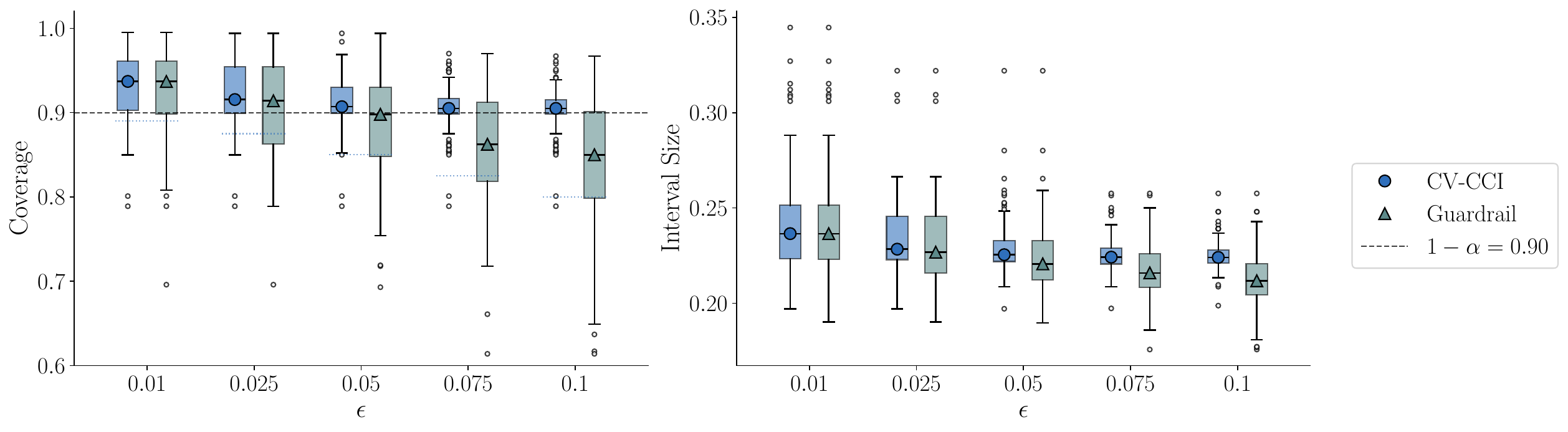}
    \caption{Empirical coverage (left) and efficiency (right) of CV-CCI and Guardrail for the no-handover potential outcome $Y(0)$ as a function of the miscoverage tolerance $\epsilon$.}
    \label{fig:sweep_eps_Y0}
\end{figure}

In Figure \ref{fig:sweep_eps_Y0} we examine whether the reliability advantage of CV-CCI persists across different choices of the tolerance parameter $\epsilon$. Holding the confounding strength fixed at $\lambda=2$, we vary $\epsilon$ over $\{0.01,0.025,0.05,0.075,0.1\}$ and evaluate CV-CCI and Guardrail for the no-handover outcome $Y(0)$. All other experimental parameters remain unchanged. The empirical coverage of Guardrail decreases as $\epsilon$ increases, remaining close to its theoretical guarantee of $1-\alpha-\epsilon$. Its performance also becomes less stable at larger values of $\epsilon$, as indicated by the increasing dispersion across folds. CV-CCI is considerably less sensitive to the choice of $\epsilon$, and its empirical coverage remains close to the nominal level $1-\alpha$. These results highlight the role of the observational component in CV-CCI, which not only reduces variability for a fixed value of $\epsilon$, as observed in Table \ref{tab:ho-fails-y0}, but also enables CV-CCI to achieve empirical coverage closer to the nominal level across different choices of $\epsilon$.

\section{Conclusions}
\label{sec:conclusions}

This paper studied reliable counterfactual inference in wireless networks when the available telemetry does not contain all the information used by the controller. This mismatch gives rise to hidden confounding and can invalidate the coverage guarantees of counterfactual methods that rely on observational telemetry alone. To address this problem, we introduced \emph{Confounding-Valid Counterfactual Conformal Inference} (CV-CCI), a methodology that combines abundant, potentially confounded observational telemetry with limited randomized data to construct counterfactual KPI prediction sets with finite-sample coverage guarantees, up to a user-specified tolerance.

Experiments on MAC-layer scheduling and handover demonstrated the practical effectiveness of CV-CCI. Across different levels of hidden confounding, CV-CCI maintained the prescribed coverage while producing informative prediction sets. By contrast, methods relying primarily on observational telemetry increasingly violated the coverage requirement as the strength of confounding increased, whereas alternative valid methods produced substantially wider prediction sets or exhibited greater variability because of the limited randomized sample size.

Future work will investigate extensions of the proposed framework to sequential and online counterfactual inference under evolving network conditions \cite{gauthier2026backward}, interference-aware settings involving interacting network controllers \cite{del2025pacifista}, and active intervention strategies that adaptively collect the most informative data while minimizing network disruption \cite{pmlr-v235-zrnic24a}.

\bibliographystyle{IEEEtran}

\bibliography{references}
\end{document}